\documentclass[twoside]{article}
\usepackage[round, sort&compress]{natbib}
\usepackage[accepted]{aistats2026}
\usepackage{hyperref}
\usepackage{url}
\usepackage{booktabs}
\usepackage{amsfonts}
\usepackage{nicefrac}
\usepackage{xcolor}

\usepackage{graphicx}
\usepackage[justification=centering]{caption}
\usepackage{subcaption}
\usepackage{dsfont}
\usepackage{amsmath}
\usepackage{amssymb}
\usepackage{amsthm}
\usepackage{enumitem}
\usepackage{placeins}
\usepackage{verbatim}
\usepackage{multirow}
\usepackage{tikz}
\usepackage[ruled,vlined]{algorithm2e}
\usepackage{cleveref}
\usepackage{todonotes}

\newcommand{\ExampleBox}[1]{\todo[inline,inlinewidth=0.95\textwidth,color={rgb, 255:red, 223; green, 238; blue, 238 }]{#1}}

\newtheorem{theorem}{Theorem}[section]

\newtheorem{proposition}[theorem]{Proposition}

\theoremstyle{definition}

\newcommand{\PP}{\mathbb{P}}

\newcommand{\ud}{\, {\rm d} \kern-.015em }

\newcommand{\modulus}[1]{\left| \kern.05em #1 \kern.05em \right|}
\newcommand{\norm}[1]{\left\| \kern.05em #1 \kern.05em \right\|}
\newcommand{\inner}[1]{\left\langle \kern.05em #1 \kern.05em \right\rangle }

\newcommand{\pick}[2]{\renewcommand{\arraystretch}{0.6}
	\left( \kern-.4em \begin{array}{c} #1 \\ #2 \end{array} \kern-.4em \right) }

\begin{document}

%

%
\runningauthor{Trojan, Myshkov, Fearnhead, Hensman, Minka, Nemeth}

\twocolumn[

\aistatstitle{Scalable Model-Based Clustering with Sequential Monte Carlo}

\aistatsauthor{ Connie Trojan \And Pavel Myshkov \And  Paul Fearnhead }
\aistatsaddress{ Lancaster University \And  Microsoft Research \And  Lancaster University} 

\aistatsauthor{  James Hensman \And Tom Minka \And Christopher Nemeth }
\aistatsaddress{Microsoft Research \And Microsoft Research \And Lancaster University }
]

\begin{abstract}
In online clustering problems, there is often a large amount of uncertainty over possible cluster assignments that cannot be resolved until more data are observed. This difficulty is compounded when clusters follow complex distributions, as is the case with text data. Sequential Monte Carlo (SMC) methods give a natural way of representing and updating this uncertainty over time, but have prohibitive memory requirements for large-scale problems. We propose a novel SMC algorithm that decomposes clustering problems into approximately independent subproblems, allowing a more compact representation of the algorithm state. Our approach is motivated by the knowledge base construction problem, and we show that our method is able to accurately and efficiently solve clustering problems in this setting and others where traditional SMC struggles.
\end{abstract}

\section{INTRODUCTION}
\begin{figure*}
    \centering
    \includegraphics[height=4.5cm]{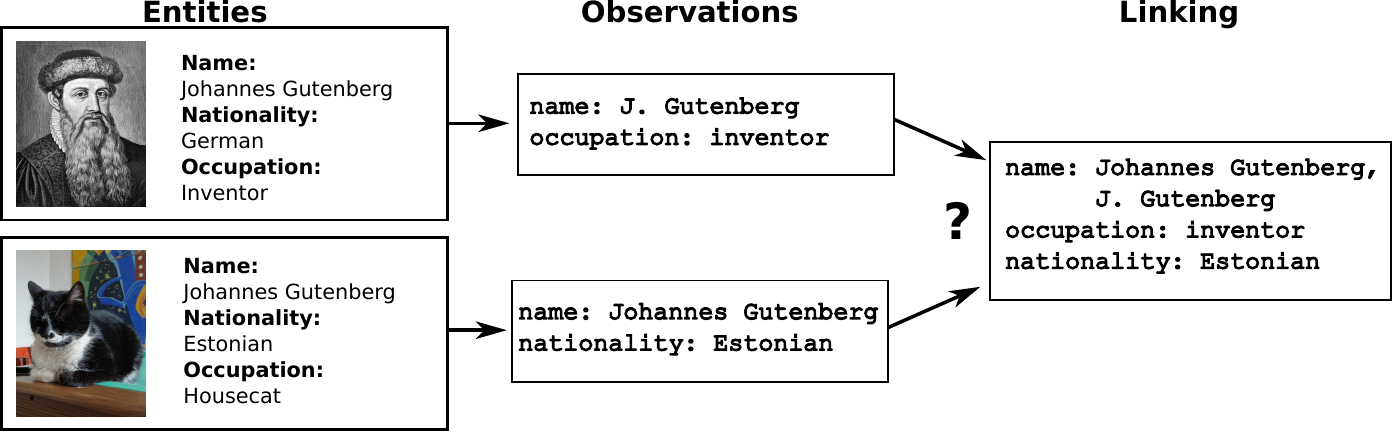}
    \caption{Entity disambiguation on Wikidata entries for “Johannes Gutenberg” \\ \citep[source: ][]{wikidata, gutenbergcat}}
    \label{fig:entity-linking}
\end{figure*}
\subsection{Knowledge Base Construction}
There has been much recent interest in augmenting large language models with external knowledge. This has two key advantages over storing knowledge implicitly in the model's parameters: it avoids the need to fine-tune the model through additional training, allowing easy updating of knowledge over time; and it improves the explainablility of the model's output by giving explicit sources for its information. This external knowledge typically takes the form of a corpus of documents, however recent work by \cite{wang2025kblam} has focused on providing the information in the form of a structured knowledge base. This approach has a number of advantages over methods that work directly with a text corpus -- it is more scalable than placing the entire corpus in the model's context window \citep[corpus-in-context prompting, ][]{lee2024longcontext}, and does not require running an external retriever to extract relevant documents from the corpus \citep[retrieval-augmented generation, ][]{lewis2020rag}. 

However, it requires the additional step of constructing a suitable knowledge base from an unstructured text corpus. This process involves extracting structured entity fragments, consisting of partial observations of entities' attributes, and grouping them together to form entries in the knowledge base \citep{weikum2010knowledge}. The latter task is known as entity linking or entity disambiguation. When an appropriate reference database exists, this is typically seen as a classification task aiming to predict to which existing entity new observations should be attributed \citep{sevgili2022entitylinking}.
However, many applications lack appropriate reference data, making the task of grouping entity fragments more challenging \citep{szekely2015kbctrafficking, winn2021enterprise}. Since each fragment contains partial, free text observations of an entity's attributes, there can be ambiguity about which observations refer to the same entity (see Figure \ref{fig:entity-linking}). 

In this setting, knowledge base construction can be seen as a clustering problem: partial observations must be grouped into clusters, each containing observations about a different entity. Even finding a maximum likelihood clustering in this setting is difficult, since the number of possible clusterings increases exponentially with dataset size. An additional challenge is that inference must often be performed in the online setting, since new entities may appear and entity properties can change over time \citep{weikum2010knowledge, zaporojets2022tempel}. Existing algorithms are typically based on hierarchical clustering, merging fragments together in a greedy fashion based on pairwise comparisons \citep{benjelloun2009swoosh}. However, this approach struggles in cases where there is significant uncertainty over possible clusterings that cannot be resolved until more information is observed, since merges cannot be undone without re-running the algorithm. As such, we focus on clustering algorithms designed for the online setting, which have the additional advantage of having reduced computational cost for large problems compared to offline methods.

\subsection{Model-based clustering}

Motivated by the knowledge base construction problem, we focus on solving clustering problems with a large, unknown underlying number of clusters, which require the use of a (potentially black-box) probabilistic model for cluster generation. We compare to both online and offline baselines in order to investigate the trade-off between accuracy, computational cost, and online capability.

Model-based clustering methods that do not require the number of clusters to be specified in advance are typically based on Bayesian nonparametric models, described in more detail in Section \ref{sec:Bayesian-nonparametrics}. Since the number of possible clusterings of the data (and hence the number of states in the posterior distribution) increases exponentially with dataset size, inference methods typically use sample-based approximations, or make simplifying assumptions in order to lower-bound the posterior density.

One commonly used approximate method is agglomerative hierarchical clustering \citep{jain1988clustering}, which considers clusterings that can be obtained by starting with every observation in a singleton cluster and iteratively merging until some truncation/stopping condition is met. 
Potential merges can be evaluated by computing hypothesis test statistics \citep{banfield1993clustering,heller2005bayesianclustering}, with a natural stopping criterion given by the point at which no more proposed merges are accepted by the hypothesis test. This gives a greedy algorithm for estimating maximum likelihood or maximum a posteriori clusterings.

Approximate Bayesian inference in this setting spans a range of techniques. Variational inference methods \citep{kenichi2007variationaldpmm,scherreik2020onlinebnp} optimise a simplified posterior approximation, yielding fast, scalable updates that readily extend to streaming data, but at the expense of restrictive modelling assumptions and potentially reduced accuracy to the true posterior. Inference relies on assuming independence between all latent variables and making alternating updates to assignments and parameters, so cannot be directly applied on black-box likelihoods without explicitly defined cluster parameters. 

By contrast, Monte Carlo algorithms sample cluster assignments exactly under the model, and do not impose restrictions on the form of the likelihood. Markov Chain Monte Carlo (MCMC) methods \citep{neal2000dpmmmcmc,bouchard2017splitmerge} iteratively sample individual assignments from their marginal distributions, but remain inherently offline and can converge slowly for large datasets. Distributed MCMC methods \citep{geetal2015distributeddpmm,Marchant03042021} can accelerate convergence, but typically rely on problem features that may not hold in the black-box setting (e.g. availability of summary statistics). Sequential Monte Carlo (SMC) offers an online sampling alternative which incrementally samples a particle approximation of the posterior \citep{Fearnhead2004mixture, caninietal2009LDAonline}. However, its memory and computational demands grow quickly with problem size \citep{kantas2015particle}.

\paragraph{Our contribution} We propose a novel online clustering framework that exploits the form of the posterior distribution to decompose large, complex datasets into independent subproblems and thereby enable efficient Sequential Monte Carlo inference. This problem decomposition is updated dynamically as new data arrive. Our method retains the asymptotic exactness of vanilla SMC, and we prove that the factorisation improves the reverse KL divergence to the full posterior, ensuring approximation fidelity is preserved. We demonstrate that our split SMC algorithm delivers substantial computational savings and improved clustering accuracy compared to standard approaches.

\section{BACKGROUND}
\subsection{Bayesian Clustering Models}\label{sec:Bayesian-nonparametrics}

Bayesian nonparametric clustering methods allow the number of clusters $k$ to grow with the data, making them well suited to tasks where the true cluster count is unknown.  A canonical example is the Dirichlet process mixture model (DPMM), see \cite{murphy2023pml2Book}.
Under a DPMM, we imagine data arriving sequentially: suppose that at time $t$, the existing data $x_{1:(t-1)}$ has cluster assignments $z_{1:(t-1)}$ partitioning it into clusters $ \{c_1, c_2, \dots, c_{K}\}$. A new observation $x_{t}$ appears in an existing cluster with probability
$\Pr(z_{t} = k \mid z_{1:(t-1)}) = \frac{|c_k|}{\alpha + t - 1},$
or a fresh cluster is created with probability
 $  \Pr(z_{t} = K+1 \mid z_{1:(t-1)}) = \frac{\alpha}{\alpha + t - 1},$
where $|c_k|$ denotes the size of cluster $k$ and $\alpha>0$ is a concentration parameter controlling the rate at which new clusters appear. Once its cluster membership is decided, the data point $x_t$ is drawn from the parametric distribution associated with its cluster, whose parameters themselves are drawn from a shared prior.

Inference under a DPMM typically involves evaluating, for a given data point $x_i$, the marginal probability of assignment to a given cluster conditional on the rest of the data and their clusters:
\begin{align}
    p(z_i = k \mid z_{-i}, x_{-i})
    &\propto \underbrace{\Pr(z_i = k \mid z_{-i})}_{\text{prior term}}
               \times
               \underbrace{p(x_i \mid x_{-i,k})}_{\text{likelihood term}}\,,
    \label{eq:marginaldist}
\end{align}
where $z_{-i}$ and $x_{-i}$ denote all assignments and observations except the $i$th, and $x_{-i,k}$ is the collection of these observations assigned to cluster $k$. The prior term follows directly from the above and the likelihood term requires integrating over any unknown cluster parameters. This likelihood can be obtained as the ratio $p(x_i, x_{-i,k})/p(x_{-i,k})$, where $p$ refers to the likelihood of observing a given unordered collection of datapoints in a sample of the specified size.  
This marginal form underpins both Gibbs sampling \citep{neal2000dpmmmcmc} and particle‐based sequential Monte Carlo (Section \ref{sec:smc}), since each update only requires local reassignment probabilities rather than enumeration of all partitions.

\subsection{Sequential Monte Carlo}\label{sec:smc}

Sequential Monte Carlo (SMC) provides an online particle‐based approximation to the evolving posterior distribution $p(z_{1:t}\mid x_{1:t})$, where the $z_t$ are unobserved latent variables (in our case, cluster assignments) and $x_t$ the corresponding observations \citep{Doucetetal2001SMCbook}. In clustering problems, since the latent variables are discrete, it is possible to compute exact updates to the posterior distribution at each timestep. \citet{Fearnhead2004mixture} exploit this to formulate an SMC clustering algorithm that guarantees each particle represents a unique clustering, thus avoiding redundant duplicate particles. 

In this algorithm, at each time $t$, we have access to a set of $m$ particles $\{p_{t-1}^{(i)}\}_{i=1}^m$ with weights $\{w_{t-1}^{(i)}\}_{i=1}^m$ that approximate the posterior at time $t-1$. Each particle contains a distinct clustering of $x_{1:(t-1)}$, which can be represented as either a vector $z_{1:(t-1)}$ of cluster assignments, or a partition of $x_{1:(t-1)}$ into clusters. When a new data point $x_t$ arrives, we perform three operations:

\begin{enumerate}
    \item \textbf{Propagation.} Construct a set of \textit{putative} particles, with one particle for each possible assignment of $x_t$ to a cluster on each existing particle.

\item \textbf{Weighting.}  For each putative particle $\tilde{p}_t(i,c_k)$, compute unnormalised putative weights using \eqref{eq:marginaldist}:
\begin{align*}
   \tilde w_t(i,c_k) 
    \propto 
   w_{t-1}^{(i)} \: p(z_t = k \mid p_{t-1}^{(i)}, x_{1:(t-1)}).    
\end{align*}

\item \textbf{Resampling.}  Draw $m$ new particles $\{p_{t}^{(i)}\}$ and their weights $\{w_t^{(i)}\}$ by sampling from the putative particle set according to the putative weights.
\end{enumerate}

This algorithm preserves the exact posterior up to the resampling step, which bounds both memory and computation by discarding low‐probability configurations. 
Many resampling schemes are possible in this framework \citep{Carpenter1999improved,Fearnhead-Clifford2003optimal}, trading off computational efficiency and variance reduction. Our algorithmic development (below) is based on making particle approximations that minimise reverse KL divergence to the true posterior. To be consistent with this, we use a simple greedy resampling scheme, where the $m$ particles with the highest weights are selected and their weights normalised -- this is the optimal way (in terms of reverse KL divergence) of resampling a particle approximation of fixed size $m$ from a discrete distribution (Appendix \ref{sec:resamplingproofs}). 

\section{SEQUENTIAL CLUSTERING METHODOLOGY}
\label{sec:method}

\paragraph{Overview} Applying standard SMC directly to large problems quickly becomes infeasible: the number of plausible clusterings, and hence the number of particles required for accurate inference, grows exponentially with dataset size, significantly increasing computational and memory demands. Repeated resampling erodes uncertainty over earlier observations so that only the most recent data remain well‐represented. Yet, under a Dirichlet process prior, well‐separated subsets of the data are close to independent: if they have vanishing probability of sharing a cluster, their joint posterior factorises. By partitioning the dataset into these approximately independent subproblems and maintaining separate particle approximations for each, we can implicitly represent a much larger particle set, dramatically cutting both memory and computational costs. In addition, updating and resampling only the subproblem affected by each new observation allows our algorithm to preserve uncertainty elsewhere.

Our algorithm proceeds similarly to that of \cite{Fearnhead2004mixture} described in Section \ref{sec:smc}, with the addition of a splitting step (Section \ref{sec:splitting}) at the end of each iteration to check whether the clustering problem can be decomposed. After such a split has occurred, we use a modified update step (Sections \ref{sec:update-step} and \ref{sec:merging}) that operates on the factorised representation. 

\begin{figure}
    \centering
    \includegraphics[height=3.25cm, width=3.25cm]{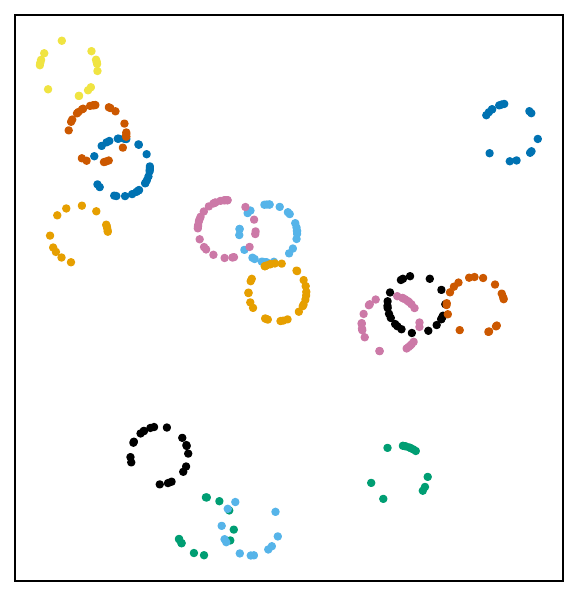}
    \includegraphics[height=3.25cm]{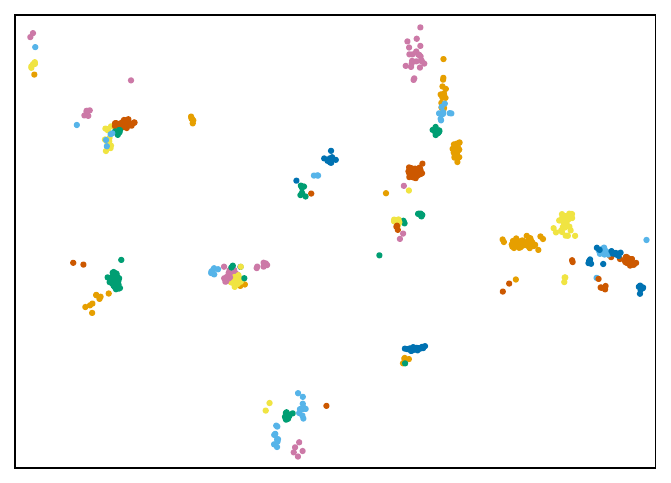}
    \\
    \includegraphics[height=3.25cm, width=3.25cm]{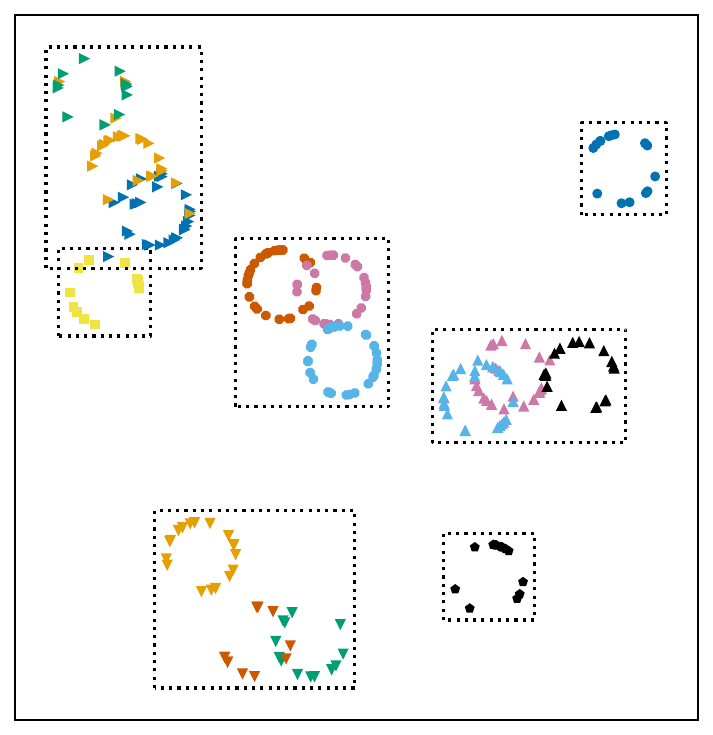}
    \includegraphics[height=3.25cm]{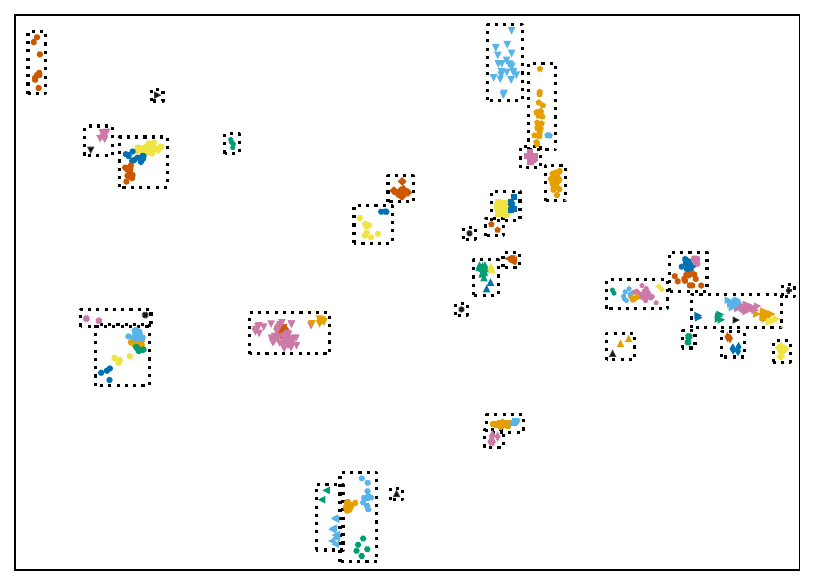}
    \caption{Top: scatterplots of the circles (left) and Gaussian mixture (right) datasets. Bottom: example split SMC clustering output, with subproblem partition
indicated in dotted lines}
    \label{fig:2d-data}
\end{figure}

\subsection{Notation}
\label{sec:notation}

We represent a clustering of the first $t$ observations $x_{1:t}$ by a partition
$p_t = \{c_1, c_2, \dots, c_{K}\}$, where each cluster $c_j\subseteq_{\text{m}} x_{1:t}$ and $\bigcup_{j=1}^K c_j = x_{1:t}$. We use the subscript m on set operations to clarify that we work with multisets of datapoints, since we consider datapoints with different time indices to be distinct from one another regardless of whether they have the same value. At time $t$, a particle approximation to the posterior has one such partition on each particle and consists of a weighted set
\begin{align*}
\mathcal{P}_t = \{\,p_t^{(i)}\}_{i=1}^m,\qquad
  \{\,w_t^{(i)}\}_{i=1}^m,
\end{align*}
where $p_t^{(i)}$ is the $i$th particle and $w_t^{(i)}$ its corresponding weight. 
In our splitting step, we further partition the data into $S$ disjoint subproblems $E_t = \{E_t^1, E_t^2, \dots, E_t^S\}$, with $\bigcup_{s=1}^S E_t^s = x_{1:t}$.  Each subproblem maintains its own particle set
\begin{align*}
  \mathcal{P}_t^s = \{\,p_t^{(s,i)}\}_{i=1}^{m_s},\qquad
  \{\,w_t^{(s,i)}\}_{i=1}^{m_s},
\end{align*}
where $p_t^{(s,i)}$ is a partition of only the observations in $E_t^s$, and $w_t^{(s,i)}$ is the corresponding weight. The maximum size of each subproblem particle set is fixed at a static value $m$, so the amount of information required to represent the stored clusterings is at most that of a vanilla particle filter with $m$ particles, which is linear in dataset size. The number of weights required, however, is linear in the number of subproblems rather than fixed.

\begin{figure*}
\centering
\includegraphics[height=4cm]{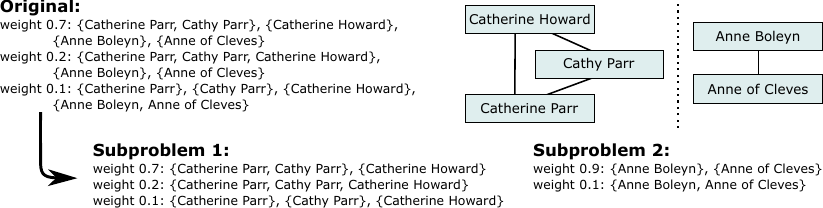}
\caption{Splitting into subproblems based on particle filter output}\label{fig:splitting}
\end{figure*}
\subsection{Split Step}
\label{sec:splitting}

Under a Dirichlet process prior, clusters are generated independently of each other. This means that if we condition on the event that two or more subsets of the data do not appear in clusters together, the true posterior distribution $p$ based on the Dirichlet process mixture model factorises into independent subproblems \citep{ewens1972sampling}. We can use this property to decompose the clustering problem: suppose that our approximate posterior $\hat{p}$ partitions the data into sub-problems and assigns no probability to data points from different sub-problems being in the same cluster. Then we can replace it with a distribution $\hat{p}^M$, that calculates the marginal distribution under $\hat{p}$ for clusters for each sub-problem, and defines the joint distribution over clusterings such that clusterings for sub-problems are independent. Since we know that the true posterior probability factorises in this way for each of the particles that make up $\hat{p}$, the factorised $\hat{p}^M$ has better accuracy.

\begin{proposition}[Informal statement]\label{prop:splitting}
Let $p$ denote the posterior distribution over clusterings of dataset $X$ induced by a Dirichlet process mixture model. Suppose we have a distribution $\hat{p}$ over clusterings of $X$, and a partitioning of $X$ into two or more disjoint subsets, such that $\hat{p}$ gives a probability of zero to any clustering where two datapoints in different partition elements appear in a cluster together. Let $\hat{p}^M$ be the distribution defined as the product of marginals of $\hat{p}$ for each partition element. Then $KL(\hat{p}^M||p)\leq KL(\hat{p}||p)$. Furthermore, $\hat{p}^M$ has the lowest KL divergence of any distribution with the same marginals as $\hat{p}$.
\end{proposition}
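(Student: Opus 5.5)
The plan is to first show that the true posterior $p$, restricted to the support of $\hat{p}$, factorises over the partition elements. The KL divergence then splits into a cross-entropy term that depends only on the marginals of $\hat{p}$, minus an entropy term. Let the partition be $X = X^1 \cup_{\text{m}} \dots \cup_{\text{m}} X^S$. Any clustering $\pi$ in the support of $\hat{p}$ is a union $\pi = \pi^1 \cup \dots \cup \pi^S$, with $\pi^s$ a clustering of $X^s$. The support of $\hat{p}^M$ is contained in the product of the marginal supports, so it also consists of such ``compatible'' clusterings. Under the DPMM, the exchangeable partition probability function (Ewens' formula) for a partition with clusters $c_1,\dots,c_K$ of $n$ points is
\begin{align*}
\frac{\alpha^K \Gamma(\alpha)}{\Gamma(\alpha+n)} \prod_{j=1}^K \Gamma(|c_j|).
\end{align*}
Conditional on the cluster parameters, the likelihood factorises over clusters as $\prod_j p(c_j)$, where $p(c_j)$ is the marginal likelihood of cluster $c_j$. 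Hence, for compatible $\pi$,
\begin{align*}
p(\pi) = C \prod_{s=1}^S f_s(\pi^s), \qquad f_s(\pi^s) = \prod_{c \in \pi^s} \alpha\,\Gamma(|c|)\, p(c),
\end{align*}
where $C$ depends only on $n$, $\alpha$ and $p(X)$. This is the key structural step, and it is what the appeal to \citet{ewens1972sampling} in the text is doing. The only care needed is that $K$ and the cluster sizes add up across subsets, which they do precisely because no cluster straddles two subsets.

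Next, for any distribution $q$ supported on compatible clusterings,
\begin{align*}
KL(q\|p) = -H(q) - \log C - \sum_{s=1}^S \mathbb{E}_{q^s}\left[\log f_s(\pi^s)\right],
\end{align*}
where $q^s$ denotes the marginal of $q$ on clusterings of $X^s$. The cross-entropy part depends on $q$ only through its marginals $q^1,\dots,q^S$. Now restrict to distributions with the same marginals as $\hat{p}$, namely $\hat{p}^1,\dots,\hat{p}^S$. Any such $q$ is automatically supported on compatible clusterings, since every component is a clustering of its own $X^s$. Minimising $KL(q\|p)$ over this class is therefore equivalent to maximising the joint entropy $H(q)$ subject to the fixed marginals.

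The standard inequality $H(q) \le \sum_s H(q^s)$ holds with equality if and only if the components are independent. This follows from subadditivity of entropy, or equivalently from nonnegativity of the KL divergence between $q$ and the product of its marginals. So the unique minimiser is $\hat{p}^M = \prod_s \hat{p}^s$, which gives the second claim. Since $\hat{p}$ itself lies in this class, $KL(\hat{p}^M\|p) \le KL(\hat{p}\|p)$ follows immediately. The gap is exactly the multi-information $\sum_s H(\hat{p}^s) - H(\hat{p})$, which is nonnegative.

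The main obstacle is making the factorisation precise. This means checking that the DPMM prior and the marginal likelihood both split as products over subproblems once the clusterings are restricted to compatible ones, with all coupling absorbed into the constant $C$. This step must also handle $\alpha$ and the $\Gamma(\alpha+n)$ normaliser carefully. The second point needing care is support: if $\hat{p}$ puts mass on clusterings outside the support of $p$, then $KL(\hat{p}\|p)=\infty$ and the inequality is trivial. Otherwise the supports of $\hat{p}$ and $\hat{p}^M$ consist only of compatible clusterings, on which $p>0$ whenever each $f_s>0$, so every term above is well defined.
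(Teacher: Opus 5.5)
Your proposal is correct and follows essentially the same route as the paper: factorise the DPMM posterior via Ewens' formula on clusterings with no cross-subset clusters, write $KL(q\|p)$ as $-H(q)$ plus a cross-entropy term that depends only on the marginals, and conclude from $H(q)\le\sum_s H(q^s)$, with equality exactly for the product. The differences are minor: you treat $S$ subsets at once rather than by induction from $S=2$, and you track the normalising constant and support conditions more carefully than the paper (your remark that any $q$ with these marginals is ``automatically'' compatible relies, as the paper's formal version does, on reading $q$ as a coupling on the product of subproblem clusterings).
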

\begin{proof}
    With formal statement in Appendix \ref{sec:splittingproof}
\end{proof}

Splitting the clustering problem in this way allows us to implicitly represent a much larger particle set than could be represented explicitly with one particle per clustering of the full dataset, as the effective size of the new particle set is the product of the number of particles on each sub-problem (see Appendix \ref{sec:ess}). This involves no extra storage cost compared to the original SMC approximation, nor does it involve any additional computational cost at the next iteration to update the approximation with a new observation, but is guaranteed to give a better approximation.

The no-overlap event can be detected by constructing a graph whose vertices are observations and whose edges connect any two points that co‐occur in a cluster on at least one particle (see Figure~\ref{fig:splitting}).  We can efficiently obtain a partition of the data into subproblems $\{E_t^1,\dots,E_t^S\}$  by splitting this graph into its connected components \citep{pearce2005connectedcomponents}. Each component $E_t^s$ induces a set of clusters
\begin{align*}
  C_t^s
  \;&=\;
  \bigl\{\,c\in\bigcup_{p\in\mathcal P_t}p : c\subseteq E_t^s\bigr\},\\
\intertext{and hence a marginal particle set for its data,} 
  \mathcal P_t^s
  \;&=\;
  \{\,p\cap C_t^s \mid p\in\mathcal P_t\}, \\
  w_t^{(s,i)}
  &=\sum_{j=1}^{|\mathcal P_t|}w_t^{(j)}\,\mathds{1}[\,p_t^{(s,i)}\subseteq p_t^{(j)}\,]\,.
\end{align*}

Collectively, the $\mathcal P_t^s$ represent the full posterior approximation as the product space $\bigotimes_{s=1}^S\mathcal P_t^s$, with factorised weights.

Figure \ref{fig:2d-data} shows a visualisation of the resulting subproblem partitions on our 2D datasets. The subproblems are indicated by a bounding box drawn around their datapoints, as well as plotted with different markers. We highlight that the subproblem partition is decided dynamically by the posterior distribution rather than by a distance or similarity metric, since our algorithm aims to find the most accurate way of representing the joint clustering distribution with a limited amount of memory. As a result, the subproblems can be very close together in 2D space and may have nonlinear boundaries. Points that are nearest neighbours in a Euclidean sense are not necessarily placed in a subproblem together, since they may nevertheless have low posterior probability of being in the same cluster once the other datapoints are considered.

The splitting approach works best when the data have some degree of hierarchical structure: our algorithm makes splits when sub-populations of datapoints have greater within-population similarity than inter-population similarity, as the SMC sampler will discard inter-population clusterings before it discards within-population clusterings, and Proposition~\ref{prop:splitting} will apply when this happens.

\subsection{Update Step}
\label{sec:update-step}

In SMC clustering, when a new observation $x_t$ arrives, we first generate a \textit{putative} particle set by extending each previous particle $p_{t-1}^{(i)}\in\mathcal P_{t-1}$ in each possible way: (i) for each existing cluster $c\in p_{t-1}^{(i)}$ and (ii) for each new cluster $c=\varnothing$ (meaning $x_t$ forms a singleton). We define
\begin{align} \label{eq:weight}
      \tilde p_t(i,c)
  \;&=\;
  \{c\cup_{\textrm{m}} x_t\} \;\cup\; p^{(i)}_{t-1}\setminus\{c\},\nonumber
  \\
  \tilde w_t(i,c)
  \;&=\; w_{t-1}^{(i)} \;
  p\bigl(x_t\text{ belongs to } c \mid p_{t-1}^{(i)},x_{1:(t-1)}\bigr)\,,
\end{align}
where the weight update is given by \eqref{eq:marginaldist}.  Collecting the $\bigl\{\tilde p_t(i,c),\tilde w_t(i,c)\bigr\}$ for each cluster on each particle yields the putative particle set.

After a split has been made, we modify this procedure to work with the factorised particle representation. Having split the data into subproblems $E_{t-1}^1,\dots,E_{t-1}^S$, we can perform the same expansion independently on each subproblem $s$, forming a pooled set $\tilde{\mathcal{P}}_t$ of the putative particles $\tilde p_t{(s,i,c)}$ from each subproblem particle set, with weights $\tilde{w}_t{(s,i,c)}$ calculated as per (\ref{eq:weight}). The particles from each subproblem with the \textit{new cluster} assignment $c=\varnothing$ correspond to an identical global event, i.e. \textit{$x_t$ is a singleton cluster}. If we retained all such duplicates, we would overcount its weight. This can be avoided by only keeping the $\varnothing$ assignments from a single subproblem when forming $\tilde{\mathcal{P}}_t$, while discarding particles with $c=\varnothing$ from all other subproblems. We choose the subproblem with the highest weighted assignment for $x_t$.

After normalising the weights for the particle approximations $\mathcal{P}_{t-1}^s$ for each subproblem at time $t-1$, the particle approximation to the full posterior over clusterings of $x_{1:t}$ can be defined as follows. The probability of $x_t$ being added to subproblem $E_{t-1}^s$ and the clustering of this subproblem being $\tilde{p}_t{(s,i,c)}$ is proportional to $\tilde{w}_t{(s,i,c)}$; and then conditional on this, the clustering for subproblems $E_{t-1}^{s’}$, for $s’\neq s$, is drawn independently from the particle approximation $\mathcal{P}_{t-1}^{s’}$. 

There are two problems with this approximation. First, it places $x_t$ in all subproblems, violating the no-overlap condition that leads to independence across subproblems. Second, it involves more than $m$ particles for the clustering of $x_t$. To deal with the latter, we perform resampling, so as to maintain the required number of particles at each iteration. Then, based on whether the resampled particles obey the no-overlap condition, we may need to merge sub-problems.

A greedy resample of the putative particles simply involves choosing the $m$ values $\tilde{p}_t{(s,i,c)}$ that have largest weights $\tilde{w}_t{(s,i,c)}$. If the resampled particles all are within the same sub-problem, then our no-overlap condition holds with $x_t$ allocated to the appropriate subproblem, and we replace its particle set with the resampled set. If not, we perform a merge across subproblems that include $x_t$ as described in the next section. In either case, we then attempt the split step to further improve the partition into subproblems. We visualise this state update in Appendix \ref{sec:smc-details}.

\subsection{Merge Step}\label{sec:merging}

When the assignments for $x_t$ span multiple subproblems, the data partition is updated by merging the affected subproblems, which we index by $\mathcal{S} \subseteq \{1,\ldots,S\}$. This is done by producing a sample from their joint distribution, using the fact that this was independent for assignments prior to $x_t$. The previous split particle set $\{\mathcal{P}_{t-1}^s\}_{s\in\mathcal{S}}$ implicitly represents a larger particle set of solutions to the full clustering problem on $x_{1:(t-1)}$, formed as a Cartesian product of the subproblem particles. Hence, each updated particle $\tilde{p}_t{(s,i,c)}$ implicitly represents a joint distribution over clusterings of $x_{1:t}$, where the clustering of $x_t$ and the observations in $E^s_{t-1}$ has been fixed. To form an explicit representation of this joint distribution, we combine $\tilde{p}_t{(s,i,c)}$ with every possible particle configuration $p'$ drawn from the Cartesian product $\bigotimes_{j\in\mathcal{S}\setminus\{s\}}\mathcal{P}^j_{t-1}$. Then we have for each combination and weight
\begin{align*}
    p_{\rm joint} &= \tilde{p}_t{(s,i,c)} \;\cup\; p', \\
    w_{\rm joint} &= \tilde{w}_t{(s,i,c)} \;\times\; \prod_{j\in\mathcal{S}\setminus\{s\}} w_{t-1}^{(j,\cdot)}\,.
\end{align*}
Computing this expanded set for each $\tilde{p}_t{(s,i,c)}$ gives us an explicit form for the joint distribution represented by the output of the update step, across the subproblems in $\mathcal{S}$. Resampling this set yields a valid sample from the joint posterior over the affected subproblems. 

\paragraph{Checking merges} In this merge step, it is possible for a subproblem's assignments for $x_t$ to be discarded entirely -- in this case, the merge is undone by the next splitting step, with the existing particles on that subproblem thinned by the resample. We can improve the quality of the particle approximation by instead discarding particles where $x_t$ is assigned to a given subproblem if they have sufficiently low weight (see Appendix \ref{sec:resamplingproofs}). A rule of thumb is to set a threshold of $1/m$ on their combined weight, since in this case the expected number of times assignments on that subproblem will be resampled in the merging step is less than one, and the merge is likely to be redundant.

\paragraph{Multinomial merging} When $\lvert\mathcal{S}\rvert$ or the subproblem particle counts are large, performing merges by explicitly forming the full joint distribution can be expensive. A cheaper but higher-variance alternative is to use multinomial resampling in three stages: (i) sample $m$ assignments of $x_t$ (with replacement) from the marginal distribution over assignments, then (ii) for each sampled assignment, independently draw one particle from each of the other involved subproblems, and finally (iii) merge any duplicate particles. 
\pagebreak

\subsection{Surrogate Models}
\label{sec:implementation}

In domains such as entity linking, accurate cluster likelihoods often depend on complex generative models whose evaluation cost grows with cluster size, which can become prohibitive when every potential assignment must be scored in an SMC update. To mitigate this, we introduce lightweight surrogate models that can quickly identify the most plausible cluster assignments. By using the resulting DPMM posterior under this surrogate likelihood as a proposal distribution within each update step (details in Appendix \ref{sec:smc-details}), we concentrate neural likelihood evaluations on a smaller subset of high-probability assignments, drastically reducing model calls. We found that using a surrogate model in this way can even improve accuracy in the online setting (see Appendix \ref{sec:surrogate-ablation}).

For continuous data with a meaningful distance metric, Gaussian mixture models can be used. For text data, we use character‐level n-gram models with Dirichlet priors \citep{chien2015bayesngram} to capture overlap patterns in entity names. Surrogate models can also be applied as proposal distributions in vanilla SMC and MCMC sampling.

\section{EXPERIMENTS}
\begin{figure*}
    \centering{
    \includegraphics[height=4cm]{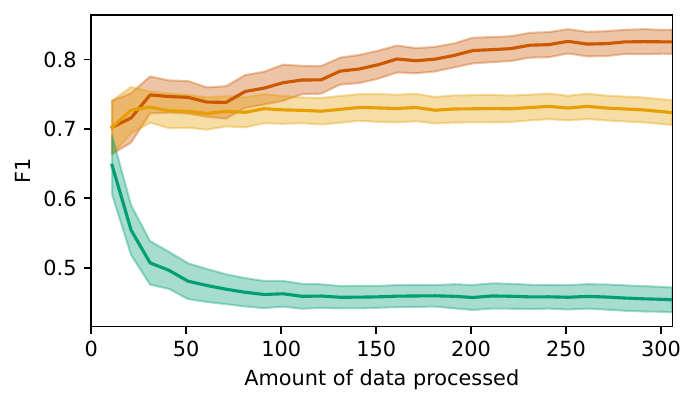}
\includegraphics[height=4cm]{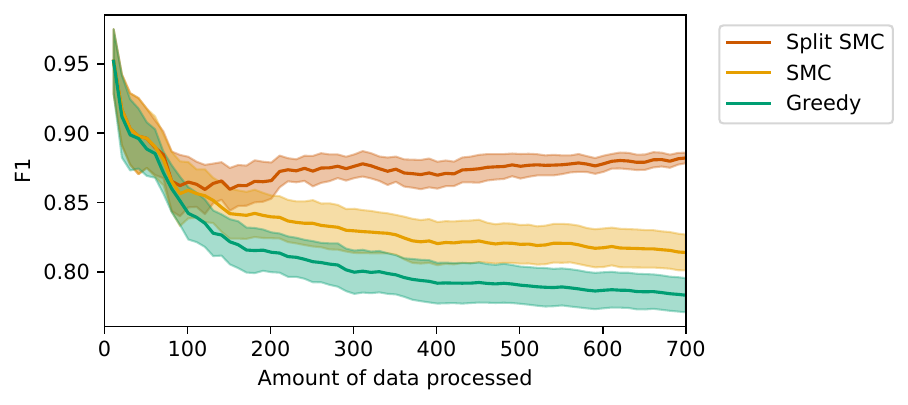}

    }
    \caption{Online accuracy for the circles (left) and GMM (right) datasets. We show the average F1 score of the highest-weighted clustering as more data is observed, with shaded regions indicating 95\% confidence intervals.}
    \label{fig:online-f1}
\end{figure*}

We compare our method to a range of standard approaches on two 2D clustering problems and three text datasets. Our baselines are as follows:
\begin{itemize}
\item \textbf{Sequential Monte Carlo} \citep{Fearnhead2004mixture} - the online clustering algorithm described in Section \ref{sec:smc}, equivalent to our proposed algorithm without the split step.
\item \textbf{Greedy clustering} - an online algorithm assigning each new datapoint to the cluster with the highest weight. This is a special case of SMC with one particle.
\item \textbf{Gibbs sampler} \citep{neal2000dpmmmcmc} - an offline MCMC algorithm that iteratively resamples individual cluster assignments using Equation \ref{eq:marginaldist}. Where a neural model is used, we also compare to a Metropolis-within-Gibbs version of this sampler, in which a surrogate model is used as a proposal distribution.
\item \textbf{Agglomerative clustering} \citep{banfield1993clustering} - an offline algorithm that iteratively merges the pair of clusters with the highest value of the Bayesian hypothesis test statistic comparing the merged and un-merged clusterings under the DPMM. This terminates when no merges are found that pass the hypothesis test.
\end{itemize}

Evaluation is based on model log-posterior density, runtime, and clustering accuracy based on precision, recall and F1 scores \citep{bagga1998entity}. The un-normalised log-posterior density is computed using Ewen's sampling formula (Equation \ref{eq:dpmm-post} in Appendix \ref{sec:splittingproof}). 

In MCMC inference, the sample with the highest posterior probability is used for this comparison. For the sequential Monte Carlo samplers, we use the highest-weighted particle in the particle set. The data were re-shuffled for each replication in order to compare performance over different orderings. Offline methods were run until termination where possible, or for a maximum runtime of $10^4$ seconds. We stopped the MCMC samplers when 500 sweeps passed without change to the estimated MAP clustering. 

We present a summary of the clustering algorithms' runtime and performance on log-posterior density and F1 in Table \ref{tab:metrics}. SMC results are reported for a fixed $m=100$ particles. An extended comparison of the full results including precision and recall is presented in Appendix \ref{sec:additional-results}, where we compare results for a wider range of hyperparameter configurations, including particle set size, to establish that our results are robust to changes in tuning and runtime budget. We give full experimental details in Appendix \ref{sec:experiment-details}, and full source code for our experiments is available on  \href{https://github.com/microsoft/smc-clustering}{GitHub}.

\subsection{Circles}\label{sec:circles}

In this experiment, clusters of observations are generated as points on a circle of fixed radius, with each cluster having a different location. This problem is particularly challenging when clusters overlap with each other, since neighbouring points can belong to different clusters and more than two points must be considered jointly to determine the number and location of clusters. A dataset of 300 points across 15 clusters was generated, giving a problem where several groups of clusters had non-trivial overlap with each other, see Figure \ref{fig:2d-data}. A variational diffusion model \citep{kingma2021variationaldiffusion} was used to approximate cluster likelihoods as the model likelihood of observing a collection of datapoints of a given size, with a Gaussian mixture model used as a surrogate.

Results across 30 replications are reported in Table \ref{tab:metrics}. The SMC samplers had equivalent or better performance on log-posterior density, recall, precision and F1 than the agglomerative algorithm. The MCMC sampler outperformed the sequential algorithms on log-posterior density and precision, at the expense of a runtime that was 10 times longer.
Split SMC significantly outperformed vanilla SMC on all metrics, and had higher recall and F1 than the MCMC sampler.

\subsection{Gaussian Mixture}

A dataset of 700 datapoints was generated from a Gaussian mixture model with a known distribution over cluster means and variances, plotted in Figure \ref{fig:2d-data}. Cluster assignments were generated from a Dirichlet process, resulting in 80 clusters with a skewed distribution of sizes.

Results across 20 replications are reported in Table \ref{tab:metrics}. The split SMC algorithm had equivalent or better performance on log-posterior density, recall, precision, and F1 compared to the offline algorithms. Split SMC significantly outperformed vanilla SMC, and the variance of the split SMC output was lower. Despite initially having very similar maximum a posteriori clusterings, the samplers' performance diverged significantly as more data was observed (Figure \ref{fig:online-f1}). In contrast to the other sequential algorithms, the split SMC sampler's output had increasing accuracy and decreasing variance, showing a lower dependence on the order in which the data were observed. We highlight that the split SMC sampler's performance was equivalent to that of offline agglomerative clustering, despite the fact that agglomerative algorithms are particularly well suited to the Gaussian setting.

\begin{table*}[]
\caption{Summary of accuracy metrics for each clustering algorithm across the datasets. The mean value of each metric across replications is reported, with its standard deviation in brackets.}
\label{tab:metrics}
\scriptsize{
\begin{center}
\begin{tabular}{lccccccccc}
\toprule
 & \multicolumn{3}{c}{Circles} & \multicolumn{3}{c}{Gaussian mixture} & \multicolumn{3}{c}{REBEL-50}\\ \cmidrule(lr){2-4}\cmidrule(lr){5-7}\cmidrule(lr){8-10}
Method & \begin{tabular}[c]{@{}c@{}}Log\\ posterior\end{tabular} & F1 & \begin{tabular}[c]{@{}c@{}}Runtime\\ (minutes)\end{tabular} & \begin{tabular}[c]{@{}c@{}}Log\\ posterior\end{tabular} & F1 & \begin{tabular}[c]{@{}c@{}}Runtime\\ (minutes)\end{tabular}  & \begin{tabular}[c]{@{}c@{}}Log\\ posterior\end{tabular} & F1 & \begin{tabular}[c]{@{}c@{}}Runtime\\ (minutes)\end{tabular} \\ \midrule
Greedy & -906 (175) & .46 (.06) & 1.4 & -1539 (34) & .78 (.03) & 0.2 & -8674 (27) & .66 (.01) & 1.4\\
SMC & -202 (92) & .73 (.04) & 1.6 & -1497 (30) & .81 (.03) & 0.4 & -8791 (52) & .65 (.01) & 1.4\\
Split SMC  & 10 (74) & \textbf{.82} (.05) & 5.1 & \textbf{-1424} (7) & \textbf{.88} (.01) & 1.7 & -8644 (17) & \textbf{.68} (.01) & 1.6\\
MCMC & \textbf{187} (15) & .81 (.02) & 54.7 & -1428 (2) & \textbf{.88} (.01) & 54.4 & -8709 (19) & .63 (.01) & 172.8\\
Agglomerative & -81 (48) & .73 (.04) & 54.7 & -1425 (0) & .87 (.00) & 239.9 & \textbf{-8543} (0) & .63 (.00)  & 172.2\\ \bottomrule
\end{tabular}
\\[\baselineskip]
\begin{tabular}{lcccccc}
\toprule
  & \multicolumn{3}{c}{REBEL-200}
 & \multicolumn{3}{c}{TweetNERD}\\ \cmidrule(lr){2-4}\cmidrule(lr){5-7}
Method & \begin{tabular}[c]{@{}c@{}}Log\\ posterior\end{tabular} & F1 & \begin{tabular}[c]{@{}c@{}}Runtime\\ (minutes)\end{tabular} & \begin{tabular}[c]{@{}c@{}}Log\\ posterior\end{tabular} & F1 & \begin{tabular}[c]{@{}c@{}}Runtime\\ (minutes)\end{tabular} \\ \midrule
Greedy  & -35382 (115) & .69 (.01) & 6.6 & -31780 (123) & .51 (.01)  & 163.7\\
SMC   & -36096 (137) & .69 (.01)  & 18.5 & -32923 (221) & \textbf{.56 (.01)} & 51.5\\
Split SMC  & \textbf{-35297} (108) & \textbf{.70} (.01) & 11.0 & \textbf{-31705} (110)& .55 (.01)&  128.0 \\
MCMC   & -- & -- & --  & -- & -- & -- \\
Agglomerative  & -- & -- & --  & -- & -- & -- \\  \bottomrule
\end{tabular}
\end{center}
}
\end{table*}

\subsection{Knowledge Base Construction}

We applied our algorithm to clustering datasets of entity fragments derived from the REBEL \citep{huguet-cabot-navigli-2021-rebel} and TweetNERD \citep{mishra2022tweetnerd} datasets, using a neural language model with a bigram surrogate (details in Appendix \ref{sec:kb-details}). We used two datasets based on REBEL: REBEL-50 and REBEL-200, which had 50 and 200 clusters, and 428 and 1671 observations respectively. They contained observations of entity fragments with a range of attributes, collected from Wikipedia. The TweetNERD data had 200 clusters and 1931 fragments containing mentions of entity names extracted from tweets. 

Both agglomerative clustering and MCMC failed to produce meaningful results on the larger datasets within our runtime budget of $10^4$ seconds, so are omitted from these comparisons. We include a full comparison on the smaller REBEL-50 dataset, where they could be run to convergence. 

Results are reported in Table \ref{tab:metrics}, across 10 replications for REBEL-50 and 20 for the larger datasets. A GPU was used in both REBEL experiments, so the runtimes for REBEL are much shorter. The language model was trained on a training split of the original REBEL dataset and achieved much lower recall clustering TweetNERD, likely due to the greater variation in entity names. On REBEL-50, split SMC was outperformed only by the agglomerative algorithm on log-posterior density, but had the highest F1 of any method. On the larger datasets, split SMC outperformed the other online algorithms on log-posterior density, though we note that the gap between it and greedy clustering on this metric was not statistically significant at the 95\% level on TweetNERD. Split SMC had the best F1 on REBEL-200, but was outperformed by vanilla SMC on TweetNERD.

\section{DISCUSSION}

Motivated by the knowledge base construction problem, we presented a novel sequential Monte Carlo algorithm for clustering and demonstrated its ability to scale to problems with complex cluster-likelihoods and a large number of clusters. Our split SMC algorithm detects and exploits near-independence in its approximation to the posterior distribution in order to produce a compact factorised representation of the algorithm state, and is therefore able to dynamically adjust the effective size of its own particle set without incurring prohibitive memory or computational cost.

We showed that for a fixed number of particles, split SMC is able to scale to clustering problems on which vanilla SMC struggles, achieving superior results on log-posterior density and, with the exception of the TweetNERD dataset, accuracy. Due to the larger number of unique clusterings represented by our algorithm and hence the larger number of model evaluations required, its runtime for a given particle set size is typically higher than that of vanilla SMC. We provide an extended analysis in Appendix \ref{sec:additional-results} showing that our algorithm also outperforms vanilla SMC for a wide range of runtimes and particle set sizes.

Since most of our experiments required working with neural likelihoods, we used simple surrogate models to reduce the number of likelihood evaluations required. This allowed accurate clusterings to be found much more efficiently, and we found that using surrogate models in this way actually improved accuracy in the online setting (see Appendix \ref{sec:surrogate-ablation}).

In addition, we found that split SMC was capable of achieving comparable results to offline methods despite operating in the online setting and requiring a substantially shorter runtime. In our experiments, it typically achieved equivalent or even superior results to the offline baselines, which in our larger experiments failed to converge or produce meaningful results within a reasonable runtime.

\section{Acknowledgements}
CT acknowledges the support of the EPSRC-funded EP/S022252/1 Centre for Doctoral Training in Statistics and Operational Research in Partnership with Industry (STOR-i), and Microsoft Research. PF was supported by EPSRC grants EP/Y028783/1, EP/R034710/1 and EP/R018561/1. CN was supported by EPSRC grants EP/Y028783/1 and EP/V022636/1 and the Research England Expanding Excellence in England (E3) funding for MARS: Mathematics for AI in Real-world Systems.

\FloatBarrier

{
\bibliographystyle{abbrvnat}
\bibliography{bibliography}

}
\section*{Checklist}



\begin{enumerate}

  \item For all models and algorithms presented, check if you include:
  \begin{enumerate}
    \item A clear description of the mathematical setting, assumptions, algorithm, and/or model. [\textcircled{Yes}/No/Not Applicable]\\

    Mathematical setting given in Section \ref{sec:Bayesian-nonparametrics}. Algorithm details given in Section \ref{sec:method} and Appendix \ref{sec:algorithm-details}. Assumptions for theoretical results given in Appendix \ref{sec:proofs}. Model details given in Appendix \ref{sec:data-details}.\\
    
    \item An analysis of the properties and complexity (time, space, sample size) of any algorithm. [\textcircled{Yes}/No/Not Applicable]\\

    Memory and runtime costs are discussed in Section \ref{sec:notation} and Appendix \ref{sec:runtime-metrics}.\\
    
    \item (Optional) Anonymized source code, with specification of all dependencies, including external libraries. [\textcircled{Yes}/No/Not Applicable]\\

    A package implementing our algorithm is available on GitHub at \href{https://github.com/microsoft/smc-clustering}{https://github.com/microsoft/smc-clustering}.\\ 
  \end{enumerate}

  \item For any theoretical claim, check if you include:
  \begin{enumerate}
    \item Statements of the full set of assumptions of all theoretical results. [\textcircled{Yes}/No/Not Applicable]
    \item Complete proofs of all theoretical results. [\textcircled{Yes}/No/Not Applicable]
    \item Clear explanations of any assumptions. [\textcircled{Yes}/No/Not Applicable]  \\

    Proofs and assumptions can be found in Appendix \ref{sec:proofs}.\\
  \end{enumerate}

  \item For all figures and tables that present empirical results, check if you include:
  \begin{enumerate}
    \item The code, data, and instructions needed to reproduce the main experimental results (either in the supplemental material or as a URL). [\textcircled{Yes}/No/Not Applicable]\\
    
    The experiment source code is available on GitHub. We give experimental details in Appendix \ref{sec:experiment-details}, describing the data used in Appendix \ref{sec:data-details}.\\ 
    
    \item All the training details (e.g., data splits, hyperparameters, how they were chosen). [\textcircled{Yes}/No/Not Applicable]\\

    Details in Appendix \ref{sec:data-details}.\\
    
    \item A clear definition of the specific measure or statistics and error bars (e.g., with respect to the random seed after running experiments multiple times). [\textcircled{Yes}/No/Not Applicable]\\

    These are given in the table and figure captions.\\
    
    \item A description of the computing infrastructure used. (e.g., type of GPUs, internal cluster, or cloud provider). [\textcircled{Yes}/No/Not Applicable]\\

    This can be found in Appendix \ref{sec:experiment-details}.\\
  \end{enumerate}

  \item If you are using existing assets (e.g., code, data, models) or curating/releasing new assets, check if you include:
  \begin{enumerate}
    \item Citations of the creator If your work uses existing assets. [\textcircled{Yes}/No/Not Applicable]
    \item The license information of the assets, if applicable. [\textcircled{Yes}/No/Not Applicable]\\

    See Appendix \ref{sec:kb-details}.\\
    
    \item New assets either in the supplemental material or as a URL, if applicable. [Yes/No/\textcircled{Not Applicable}]
    \item Information about consent from data providers/curators. [Yes/No/\textcircled{Not Applicable}]
    \item Discussion of sensible content if applicable, e.g., personally identifiable information or offensive content. [\textcircled{Yes}/No/Not Applicable]\\

    REBEL \citep{huguet-cabot-navigli-2021-rebel} and TweetNERD \citep{mishra2022tweetnerd} potentially contain personally identifiable information. Their contents are entirely drawn from text publicly available on Wikipedia and Twitter respectively.
  \end{enumerate}

  \item If you used crowdsourcing or conducted research with human subjects, check if you include:
  \begin{enumerate}
    \item The full text of instructions given to participants and screenshots. [Yes/No/\textcircled{Not Applicable}]
    \item Descriptions of potential participant risks, with links to Institutional Review Board (IRB) approvals if applicable. [Yes/No/\textcircled{Not Applicable}]
    \item The estimated hourly wage paid to participants and the total amount spent on participant compensation. [Yes/No/\textcircled{Not Applicable}]
  \end{enumerate}

\end{enumerate}

\clearpage
\appendix
\thispagestyle{empty}

\onecolumn
\aistatstitle{Scalable Model-Based Clustering with Sequential Monte Carlo: \\
Supplementary Materials}

\FloatBarrier
\appendix

\section{ADDITIONAL RESULTS}\label{sec:additional-results}

\subsection{Full Results and Runtime Comparison}\label{sec:runtime-metrics}

Here, we present the full results of each clustering algorithm on our experiments. Figures \labelcref{fig:circles-runtime,fig:gmm-runtime,fig:rebel50-runtime,fig:rebel200-runtime,fig:tweet-runtime} plot each metric against the runtime of each algorithm. 
For the offline methods, we plot the metrics every 10 iterations. We also compare to faster, approximate agglomerative algorithms that use subsampling (details in Section \ref{sec:algorithm-details}) and note their batch size in parentheses. The sequential methods were run for different particle set sizes (or model evaluation budgets in the case of greedy clustering) to measure performance across a range of possible runtimes. 

The star markers in the figures indicate the algorithm configurations reported in the main paper -- offline methods were run as close to convergence as possible (see Section \ref{sec:algorithm-details}), while SMC methods used a fixed particle set size of $m=100$. On the circles data, the Metropolis-within-Gibbs MCMC sampler achieved slightly superior results to exact Gibbs in a shorter runtime, so this is the result reported in the main paper for MCMC.

Section \ref{sec:particles} additionally presents the SMC results against the number of particles used. Our split SMC algorithm typically has a higher runtime than vanilla SMC for a fixed number of particles, primarily because it requires more model evaluations per iteration, since it stores a larger number of unique clusterings with the same number of particles. The splitting step also introduces a small additional computational overhead. We reported results for a fixed $m=100$ in the main paper for the sake of consistency across experiments, as the runtime difference was heavily dependent on the computational resources available in each experiment.  In addition, fixing $m$ highlighted that our algorithm is not typically particularly sensitive to the choice of this hyperparameter, since the effective number of particles is adjusted dynamically depending on the problem's characteristics. 

While algorithm runtimes are highly implementation-dependent, we present them to give an idea of how the various algorithm computational costs interact in practical settings -- for example: neural model evaluations, surrogate model evaluations, parallelism, and computational overhead from state updates. Agglomerative methods tend to require many more model evaluations than other algorithms, while Metropolis-within-Gibbs requires far fewer (only 1-2 per update, to compute an acceptance probability). However, the model evaluations in agglomerative and SMC clustering can be run in parallel batches at each iteration, while Gibbs updates are inherently sequential, leading to much longer runtimes for the same number of model evaluations. This difference can be seen in the REBEL experiments, which were run on a GPU - the runtime difference between SMC and split SMC is very small despite split SMC using more model evaluations, because the additional evaluations can be run in parallel.

In all experiments, it is possible to achieve reasonable accuracy with sequential methods in a much shorter time than with any offline method considered. In addition, there is a band of runtimes for which split SMC has equivalent or better results compared to every other method, on every metric except precision. Since the offline methods are initialised with every observation in a singleton cluster (giving a precision of 1), they have high precision at short runtimes. Vanilla SMC also has higher precision than split SMC on the text data, but lower recall.

\begin{figure}
    \centering{
\includegraphics[height=4cm]{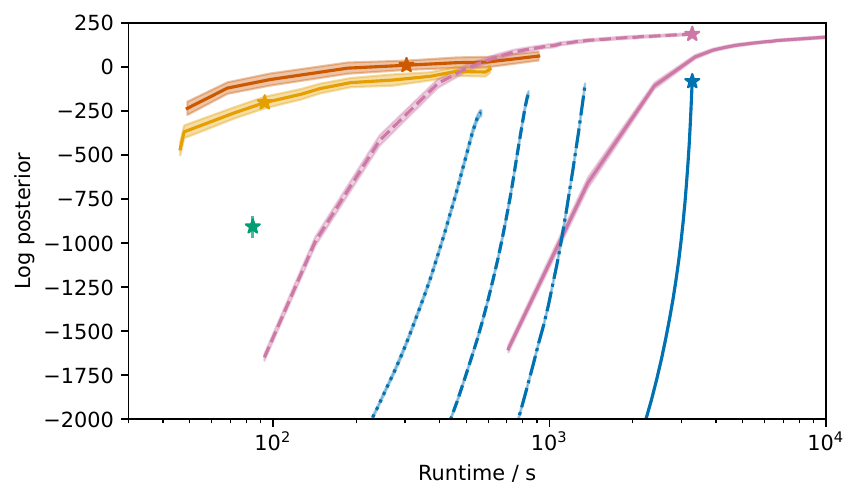}
\includegraphics[height=4cm]{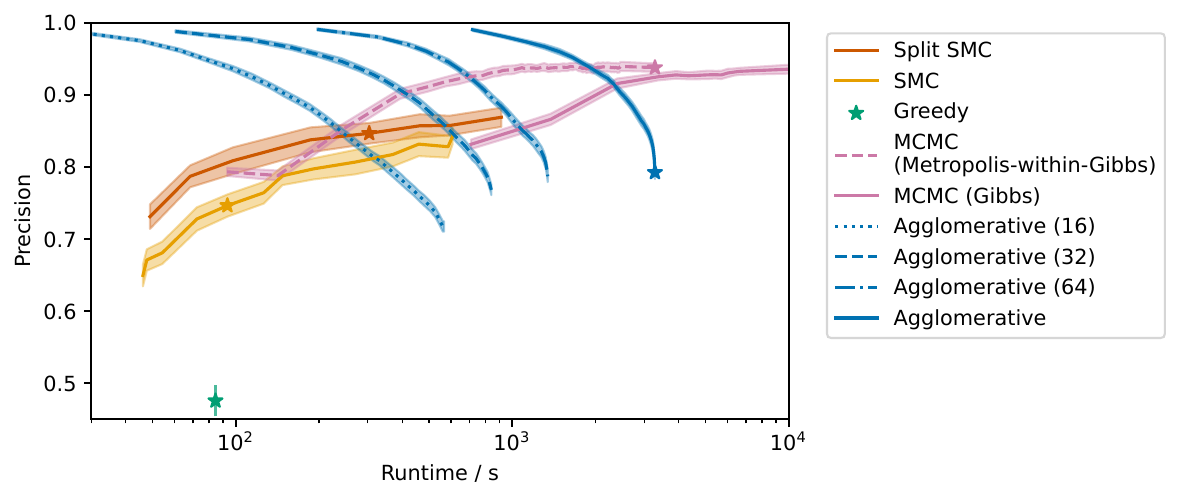}\\
\includegraphics[height=4cm]{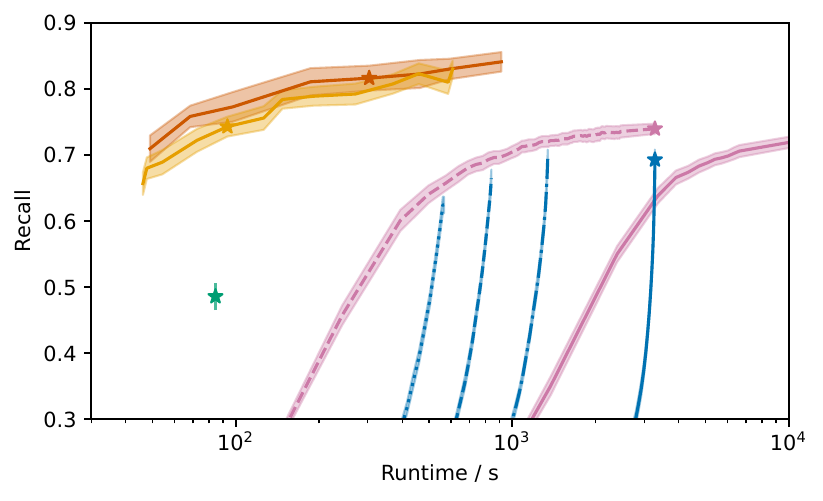}
\includegraphics[height=4cm]{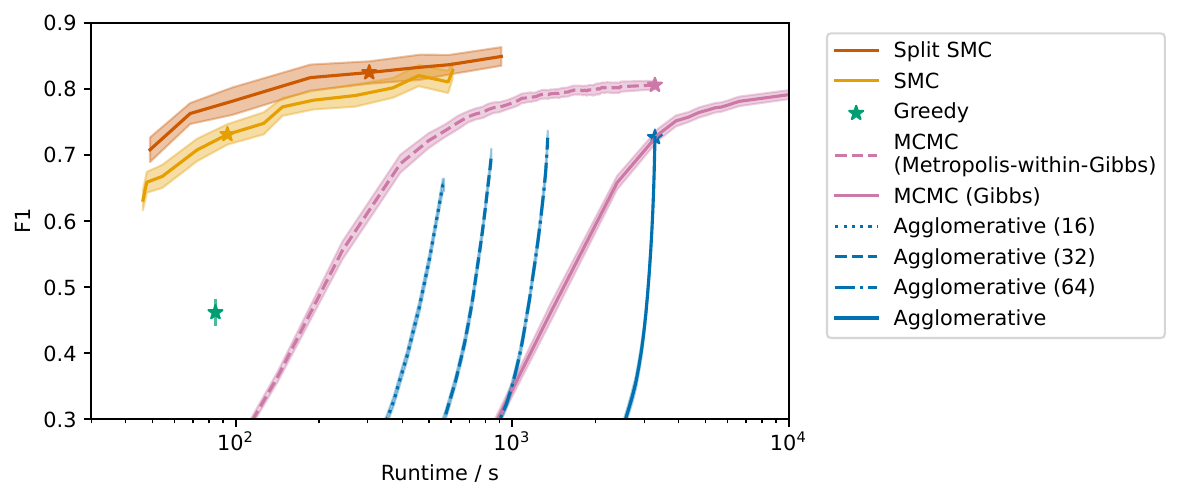}
    }
    \caption{Performance metrics against runtime (log scale) for the circles dataset. The mean across replications is plotted, with a shaded region indicating a 95\% confidence interval. The stars show the configurations reported in Table \ref{tab:metrics}.}
    \label{fig:circles-runtime}
\end{figure}
\begin{figure}
    \centering{
\includegraphics[height=4cm]{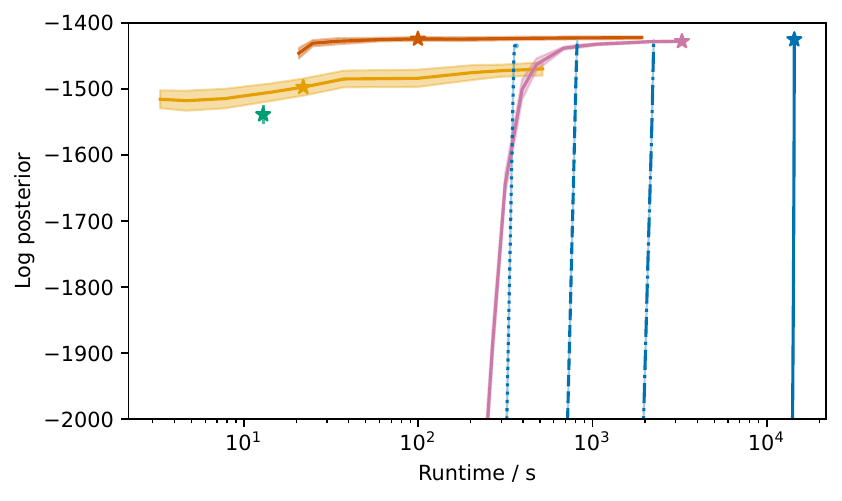}
\includegraphics[height=4cm]{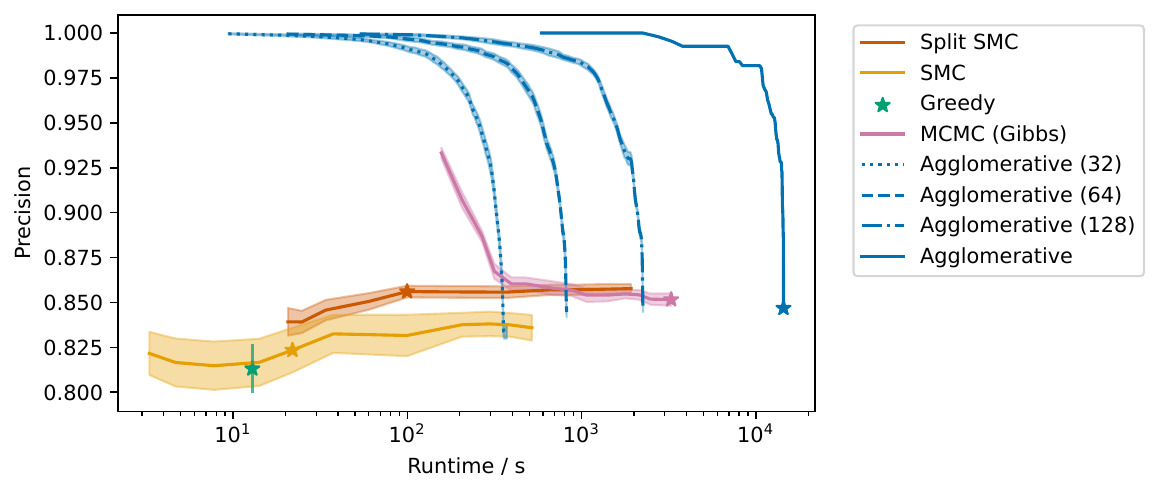}\\
\includegraphics[height=4cm]{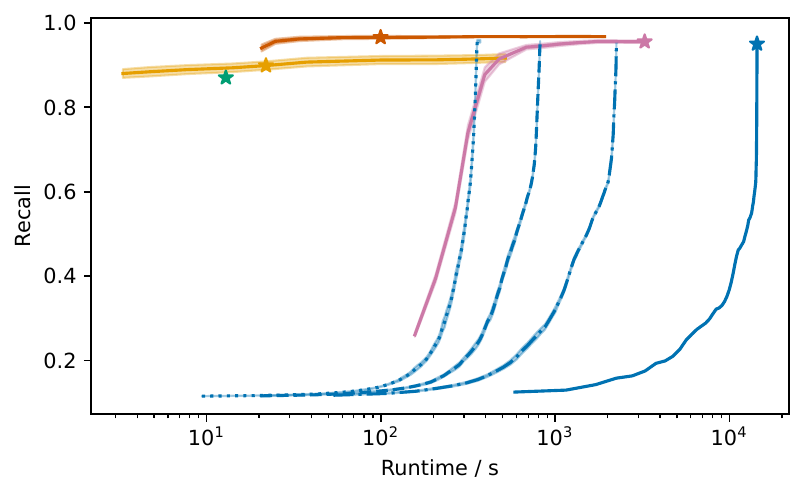}
\includegraphics[height=4cm]{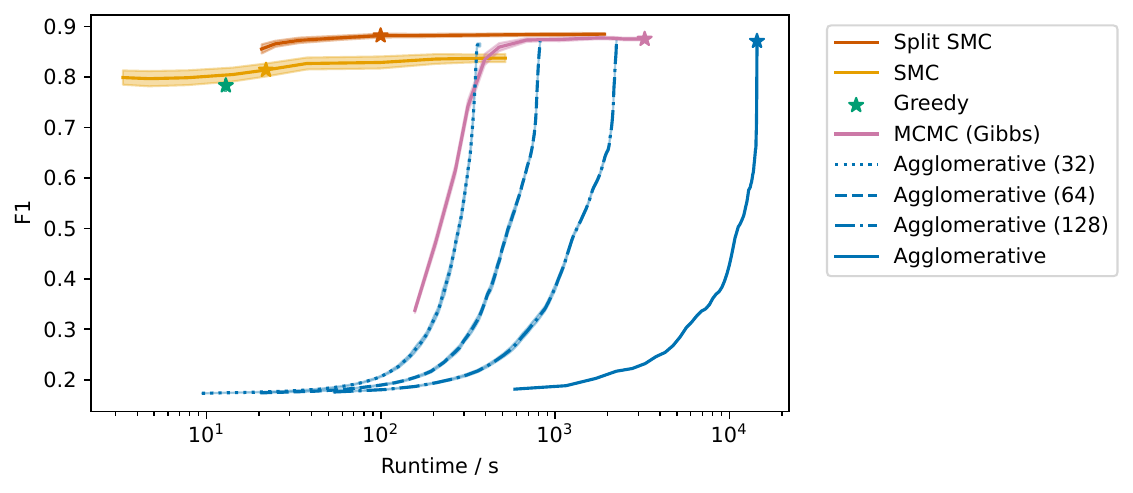}
    }
    \caption{Performance metrics against runtime (log scale) for the GMM dataset. The mean across replications is plotted, with a shaded region indicating a 95\% confidence interval. The stars show the configurations reported in Table \ref{tab:metrics}.}
    \label{fig:gmm-runtime}
\end{figure}
\begin{figure}
    \centering{
\includegraphics[height=4cm]{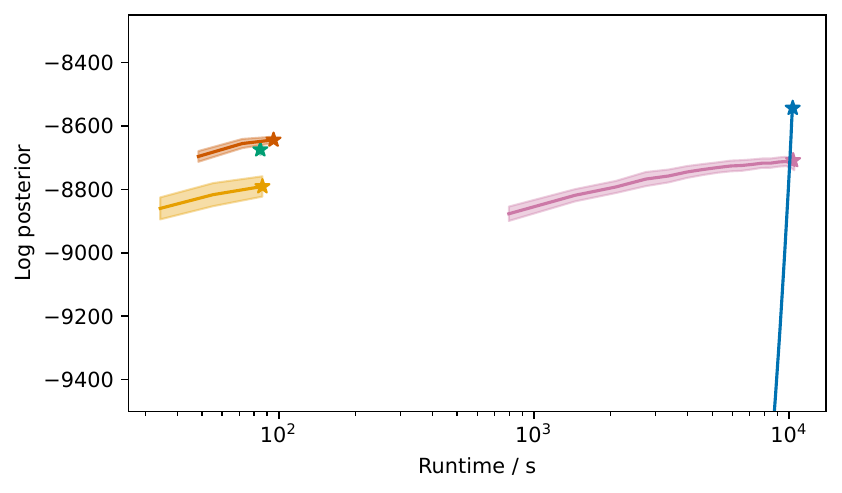}
\includegraphics[height=4cm]{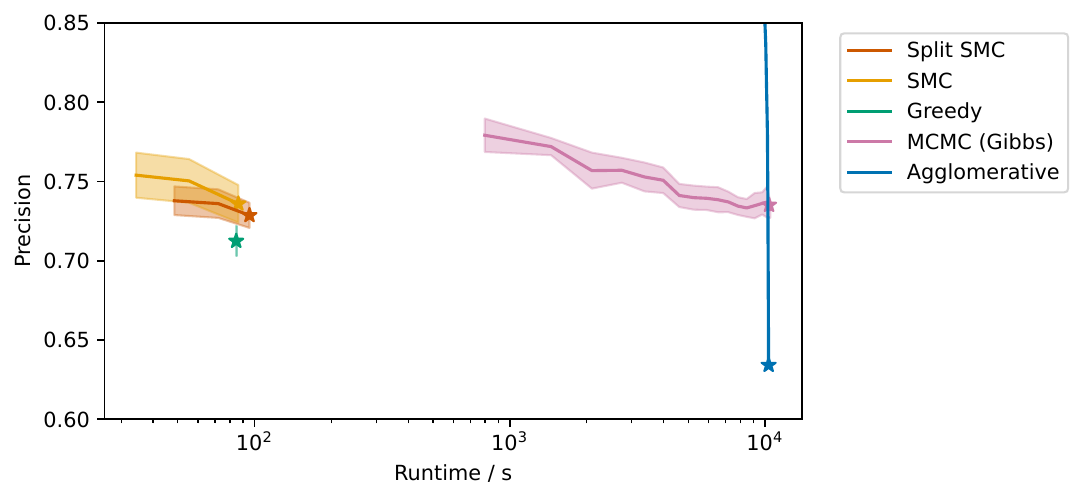}\\
\includegraphics[height=4cm]{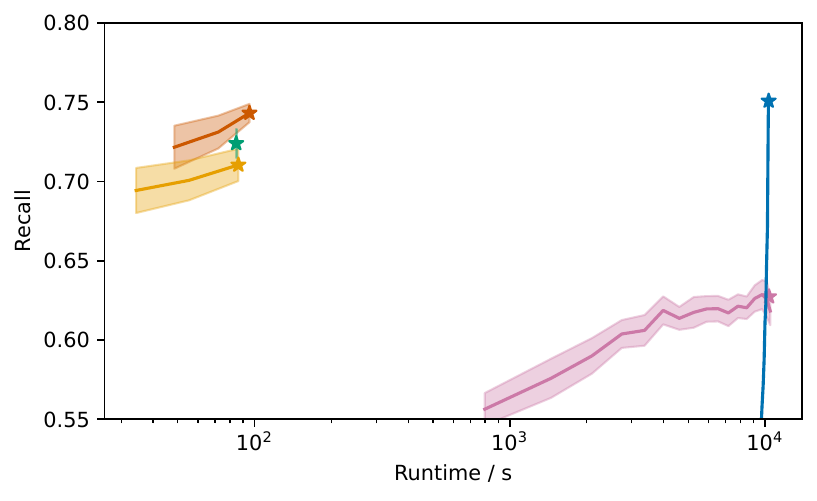}
\includegraphics[height=4cm]{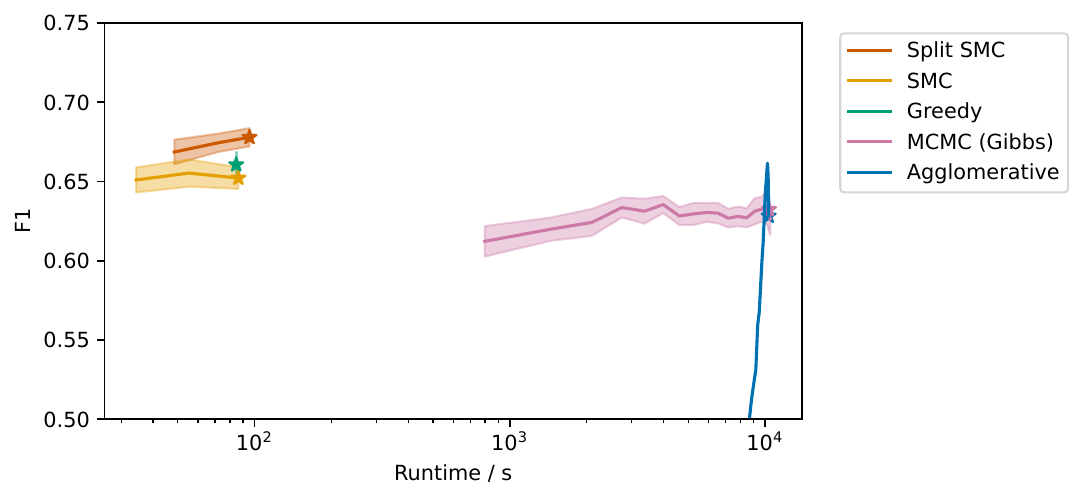}
    }
    \caption{Performance metrics against runtime (log scale) for the REBEL-50 dataset. The mean across replications is plotted, with a shaded region indicating a 95\% confidence interval. The stars show the configurations reported in Table \ref{tab:metrics}.}
    \label{fig:rebel50-runtime}
\end{figure}
\begin{figure}
    \centering{
\includegraphics[height=4cm]{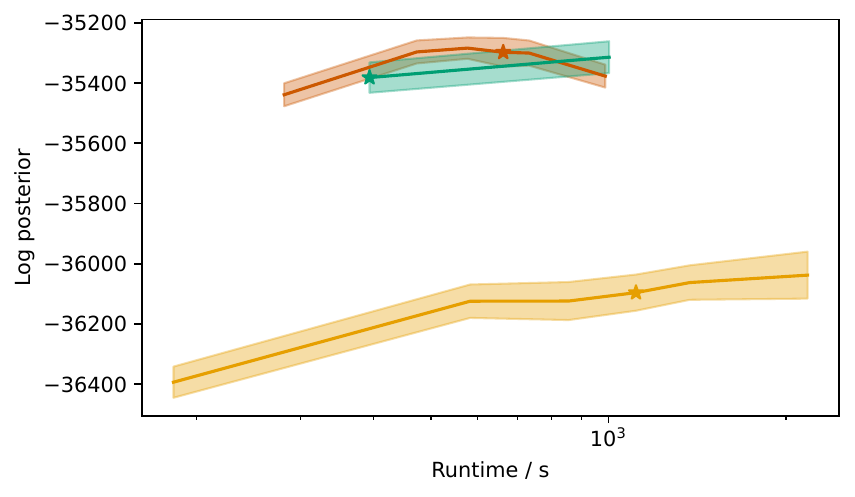}
\includegraphics[height=4cm]{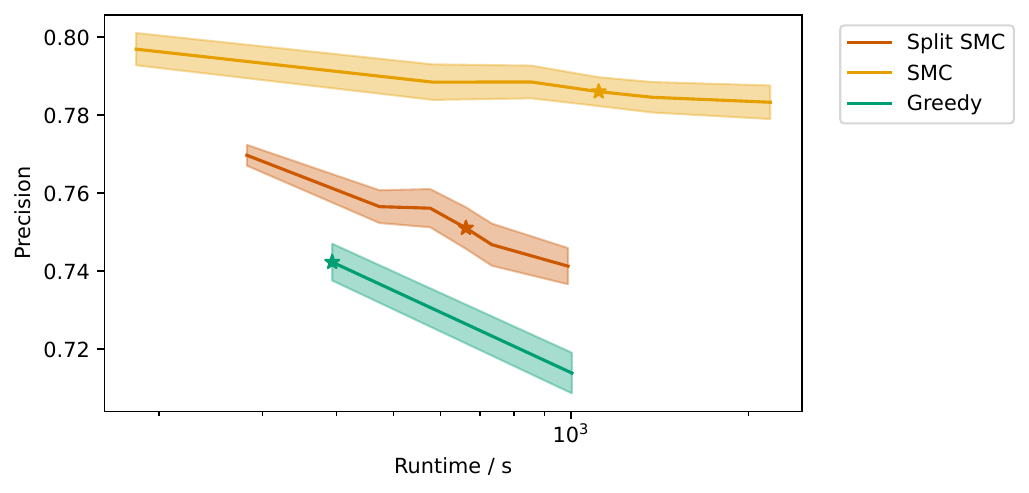}\\
\includegraphics[height=4cm]{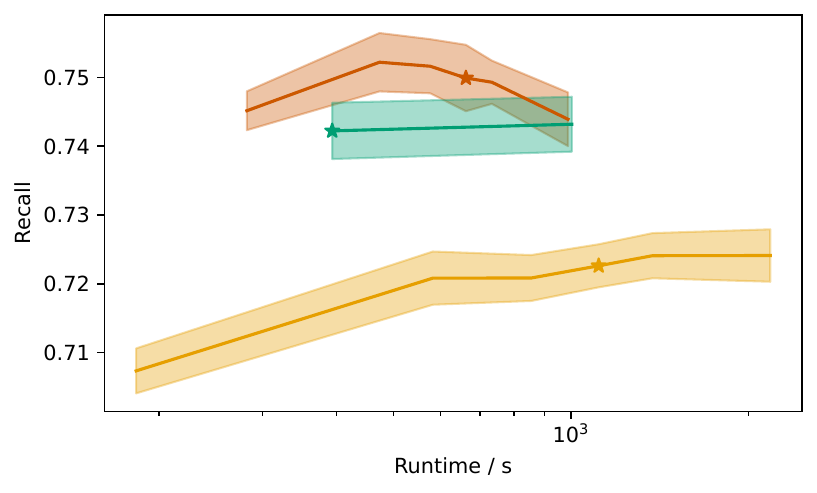}
\includegraphics[height=4cm]{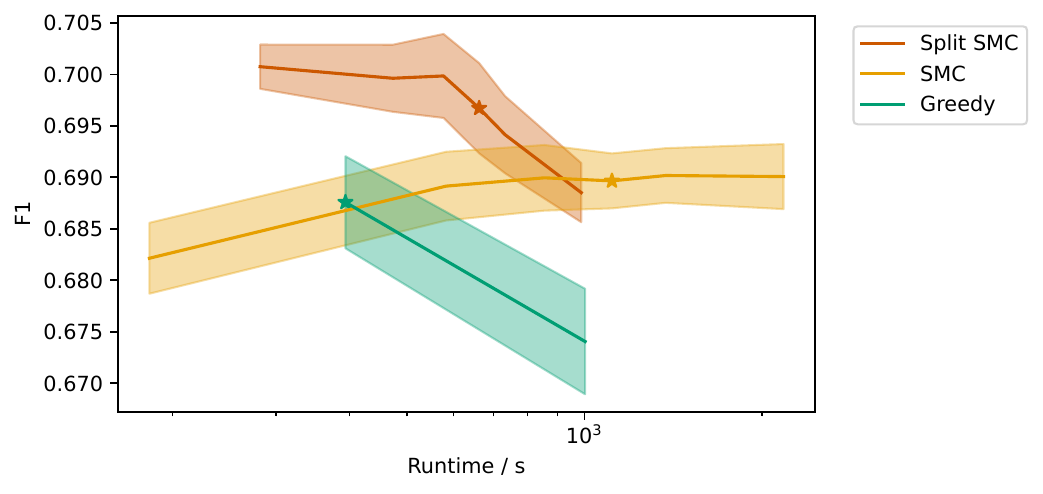}
    }
    \caption{Performance metrics against runtime (log scale) for the REBEL-200 dataset. The mean across replications is plotted, with a shaded region indicating a 95\% confidence interval. The stars show the configurations reported in Table \ref{tab:metrics}.}
    \label{fig:rebel200-runtime}
\end{figure}
\begin{figure}
    \centering{
\includegraphics[height=4cm]{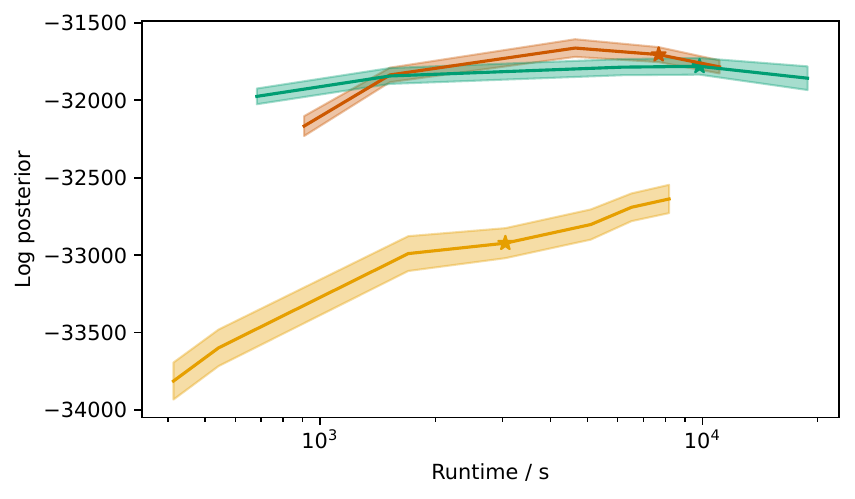}
\includegraphics[height=4cm]{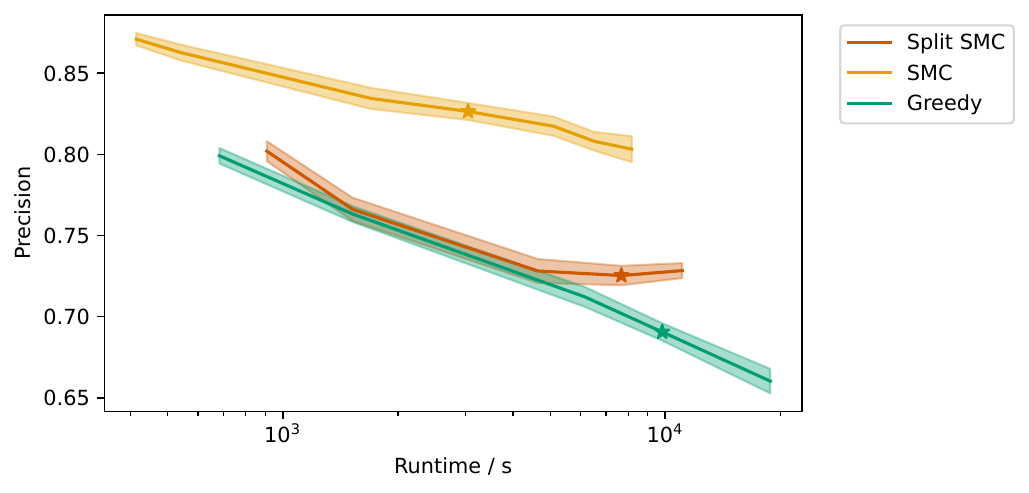}\\
\includegraphics[height=4cm]{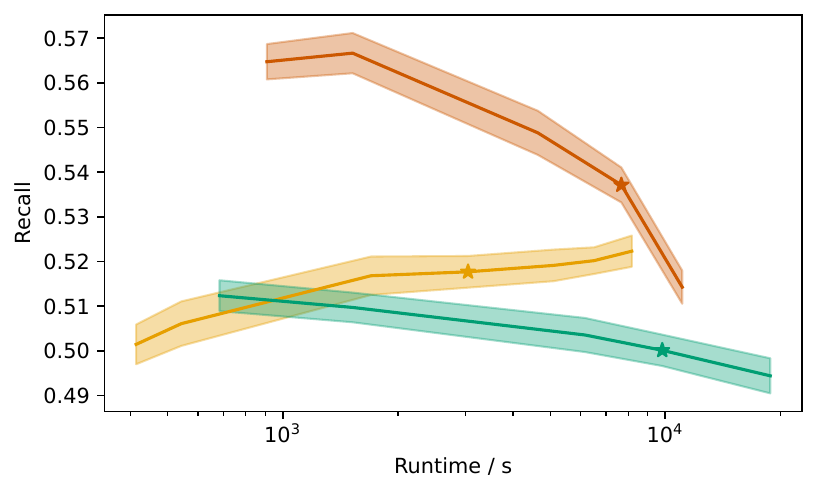}
\includegraphics[height=4cm]{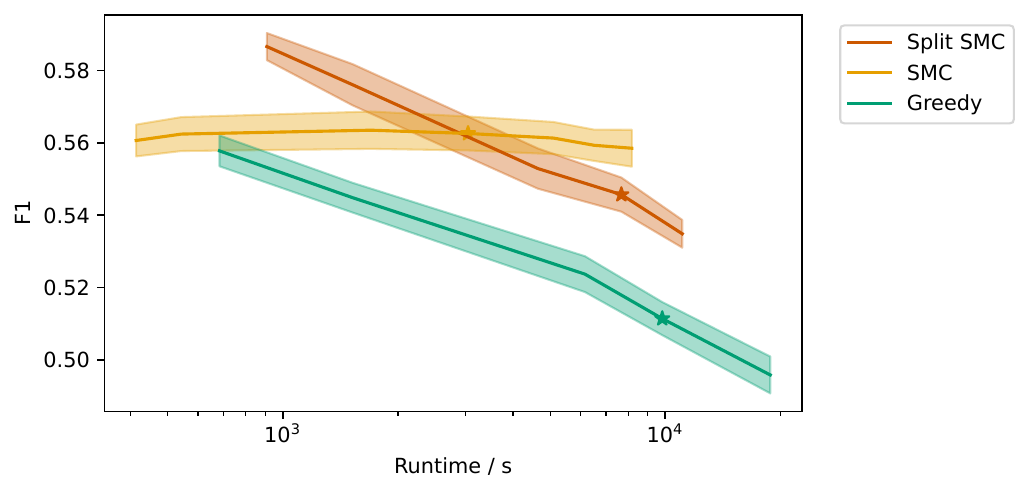}
    }
    \caption{Performance metrics against runtime (log scale) for the TweetNERD dataset. The mean across replications is plotted, with a shaded region indicating a 95\% confidence interval. The stars show the configurations reported in Table \ref{tab:metrics}.}
    \label{fig:tweet-runtime}
\end{figure}

\pagebreak
\subsection{Particle Set Size Comparison}\label{sec:particles}

Figures \labelcref{fig:circles-particles,fig:gauss-particles,fig:rebel50-particles,fig:rebel200-particles,fig:tweet-particles} plot each metric against particle set size for the SMC algorithms. When using neural likelihood models on CPU, split SMC had a higher runtime than vanilla SMC for any given particle set size $m$, so we ran vanilla SMC for a larger range of $m$ to enable a fair comparison on runtime.

\begin{figure}
    \centering{
\includegraphics[height=4cm]{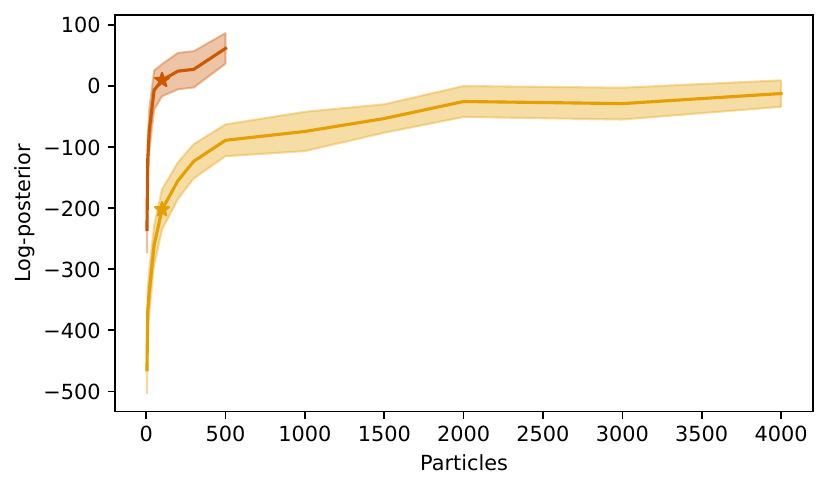}
\includegraphics[height=4cm]{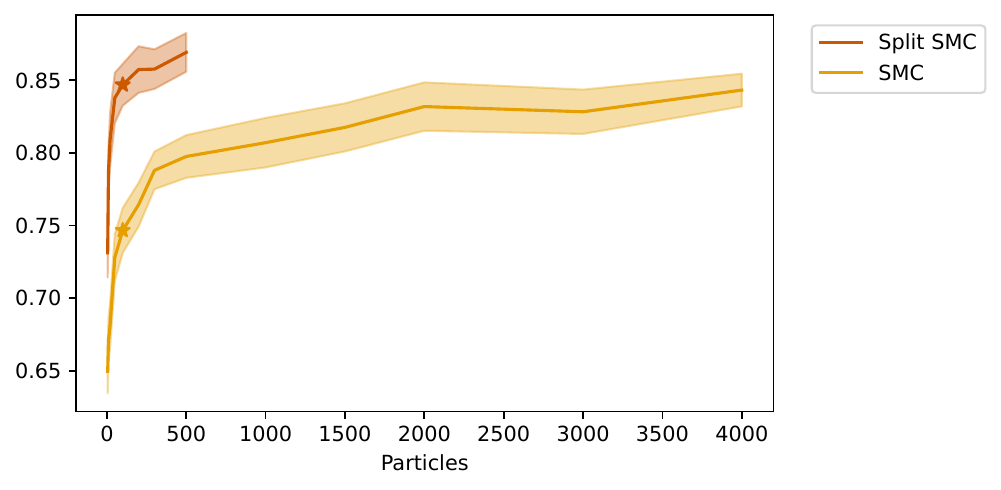}\\
\includegraphics[height=4cm]{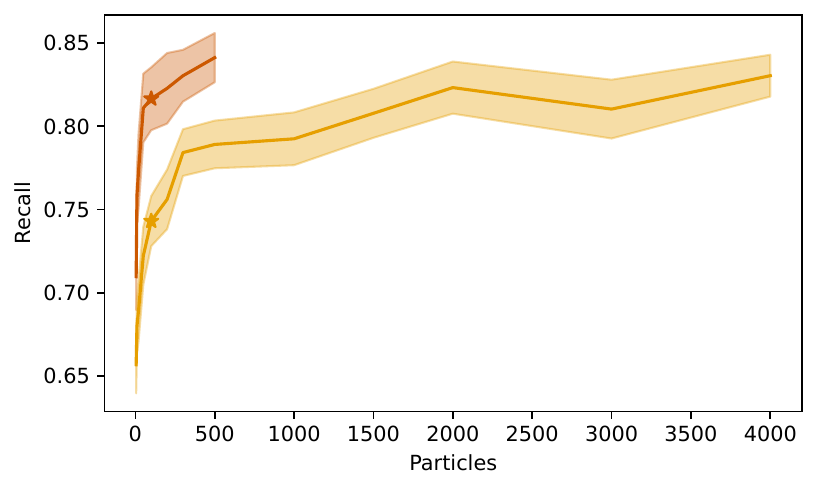}
\includegraphics[height=4cm]{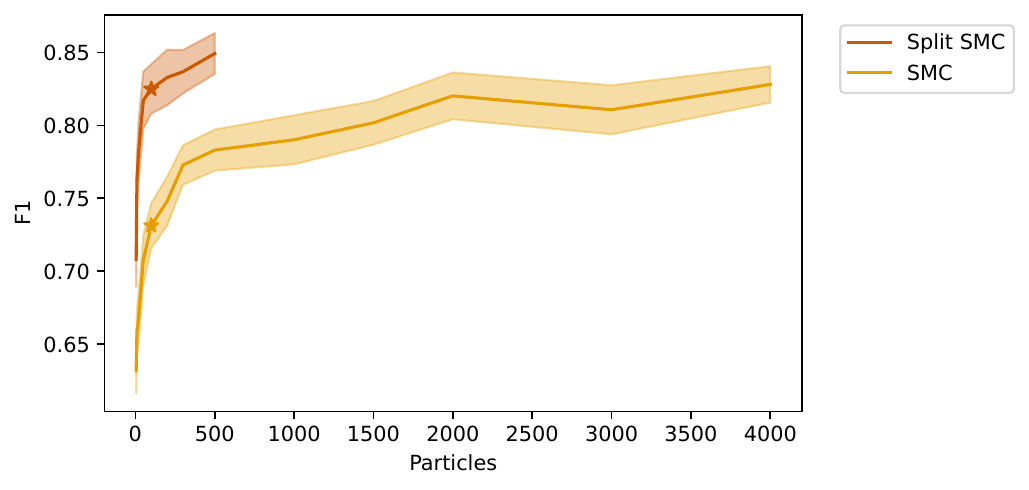}
    }
    \caption{Performance metrics against particle set size for the circles dataset. The mean across replications is plotted, with a shaded region indicating a 95\% confidence interval. The stars show the configurations reported in Table \ref{tab:metrics}.}
    \label{fig:circles-particles}
\end{figure}
\begin{figure}
    \centering{
\includegraphics[height=4cm]{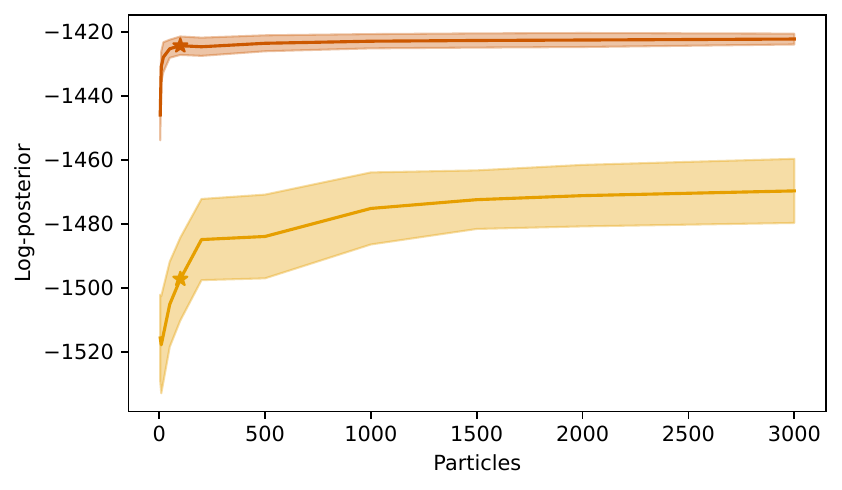}
\includegraphics[height=4cm]{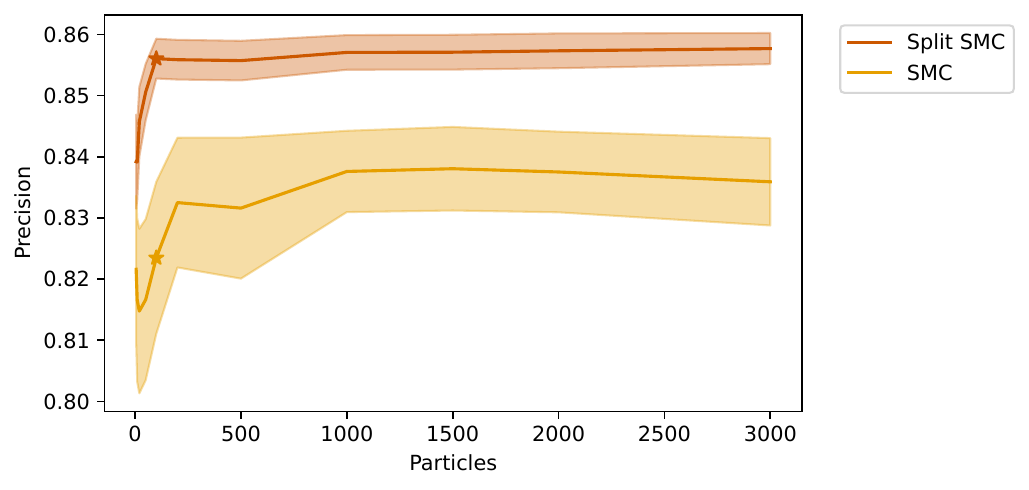}\\
\includegraphics[height=4cm]{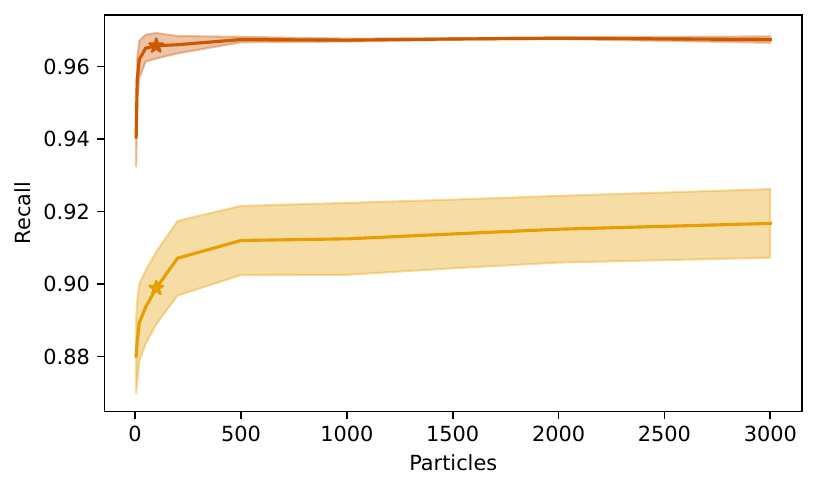}
\includegraphics[height=4cm]{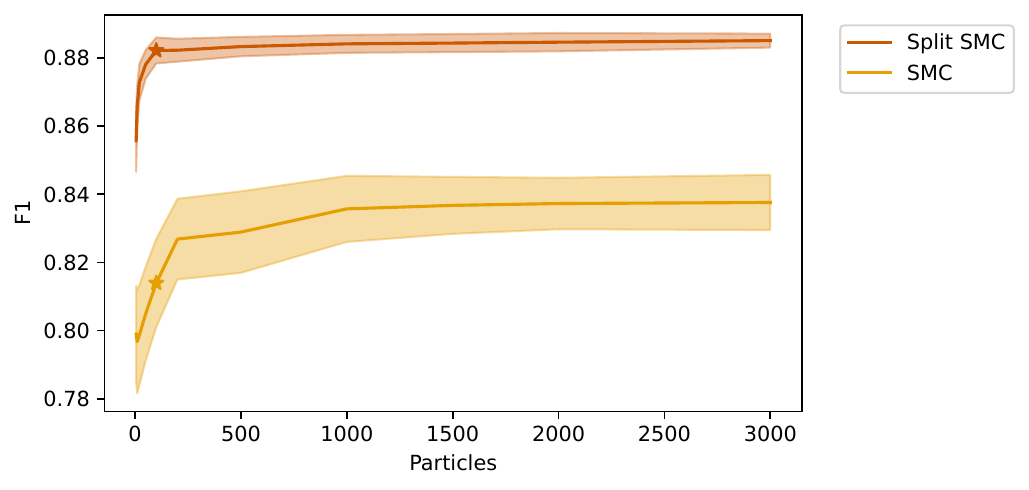}
    }
    \caption{Performance metrics against particle set size for the GMM dataset. The mean across replications is plotted, with a shaded region indicating a 95\% confidence interval. The stars show the configurations reported in Table \ref{tab:metrics}.}
    \label{fig:gauss-particles}
\end{figure}

\begin{figure}
    \centering{
\includegraphics[height=4cm]{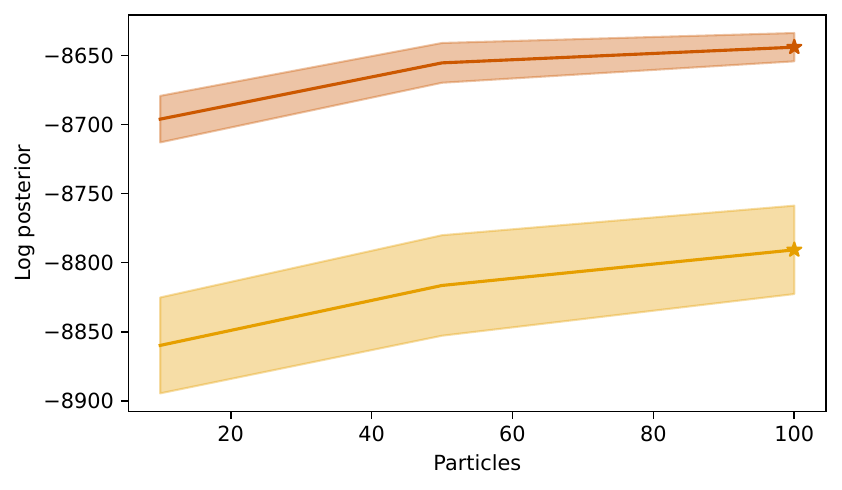}
\includegraphics[height=4cm]{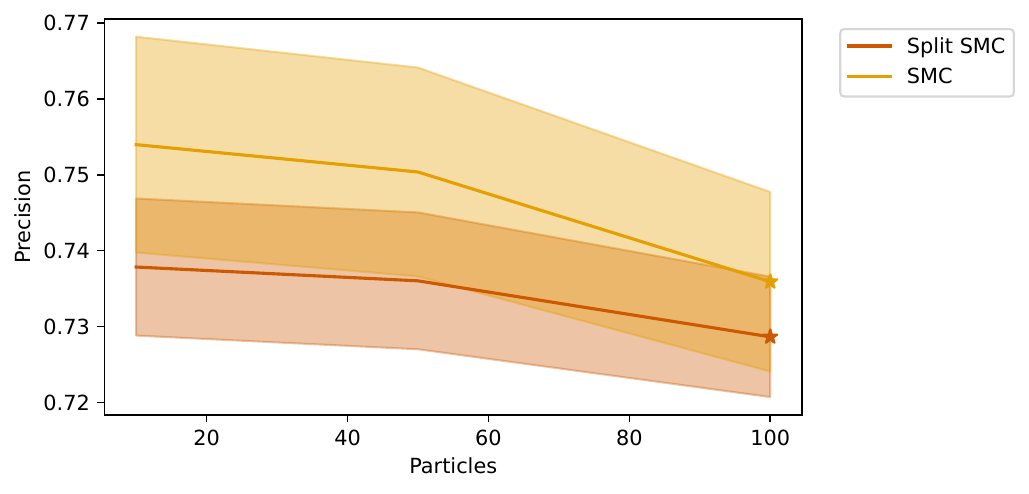}\\
\includegraphics[height=4cm]{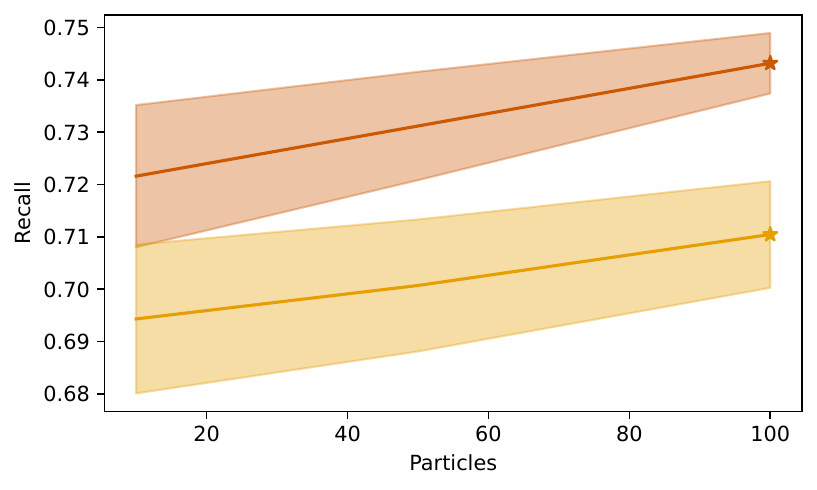}
\includegraphics[height=4cm]{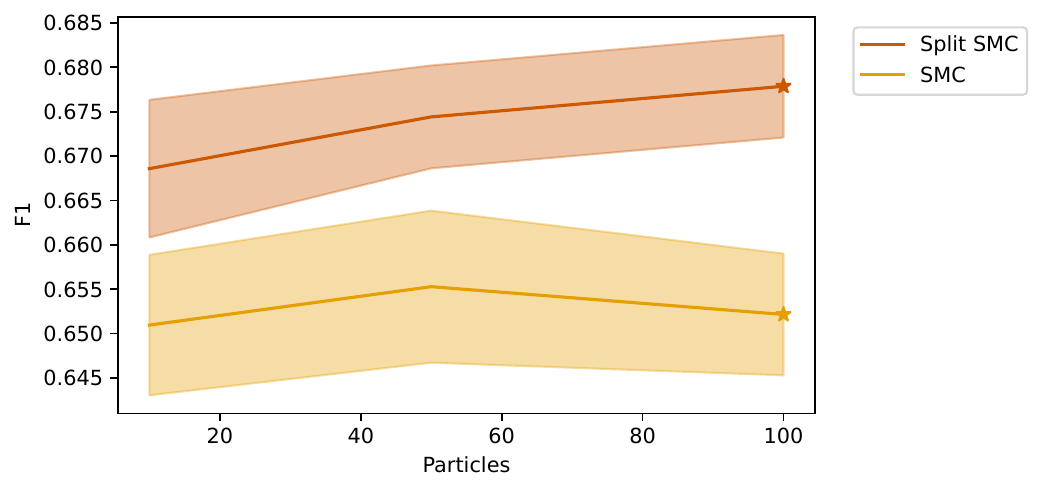}
    }
    \caption{Performance metrics against particle set size for the REBEL-50 dataset. The mean across replications is plotted, with a shaded region indicating a 95\% confidence interval. The stars show the configurations reported in Table \ref{tab:metrics}.}
    \label{fig:rebel50-particles}
\end{figure}

\begin{figure}
    \centering{
\includegraphics[height=4cm]{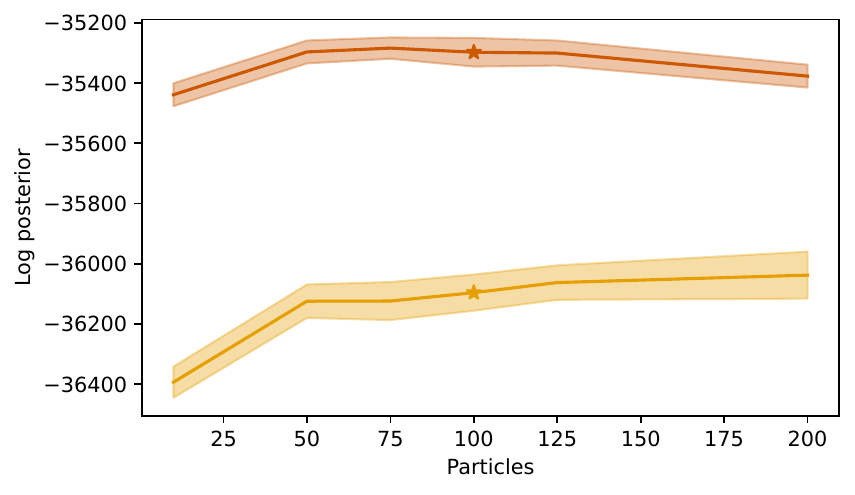}
\includegraphics[height=4cm]{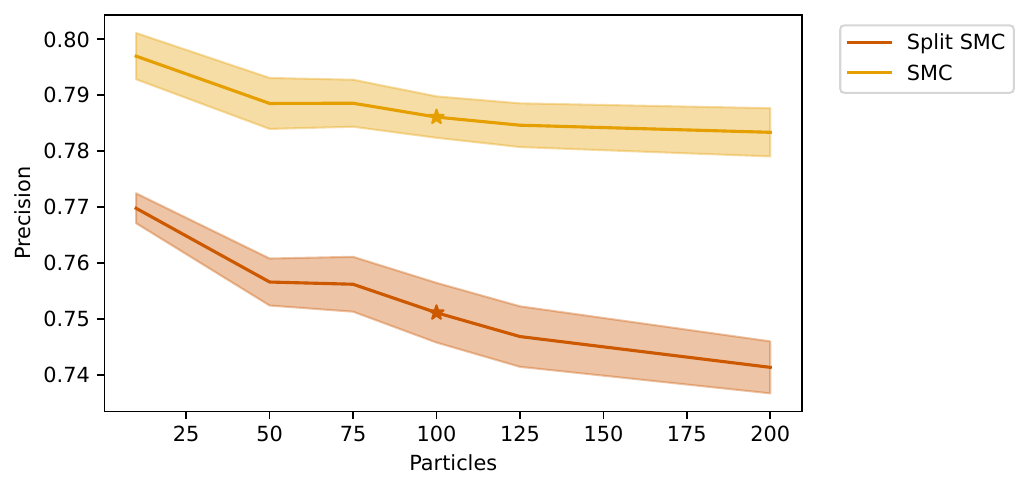}\\
\includegraphics[height=4cm]{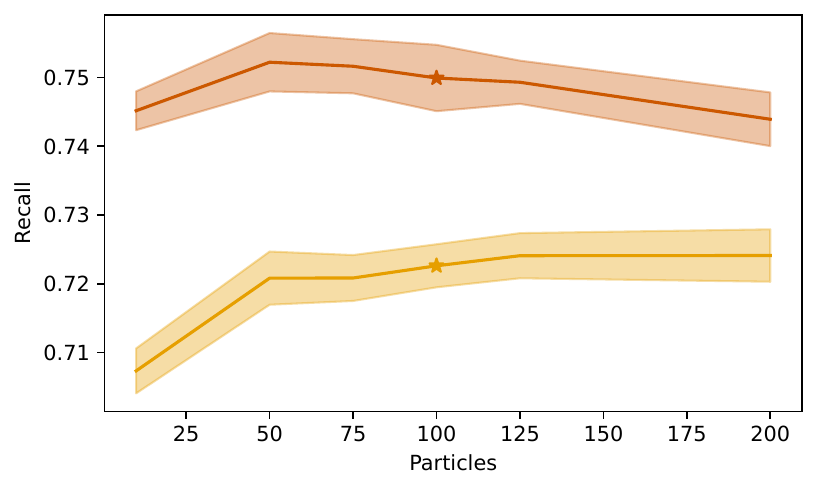}
\includegraphics[height=4cm]{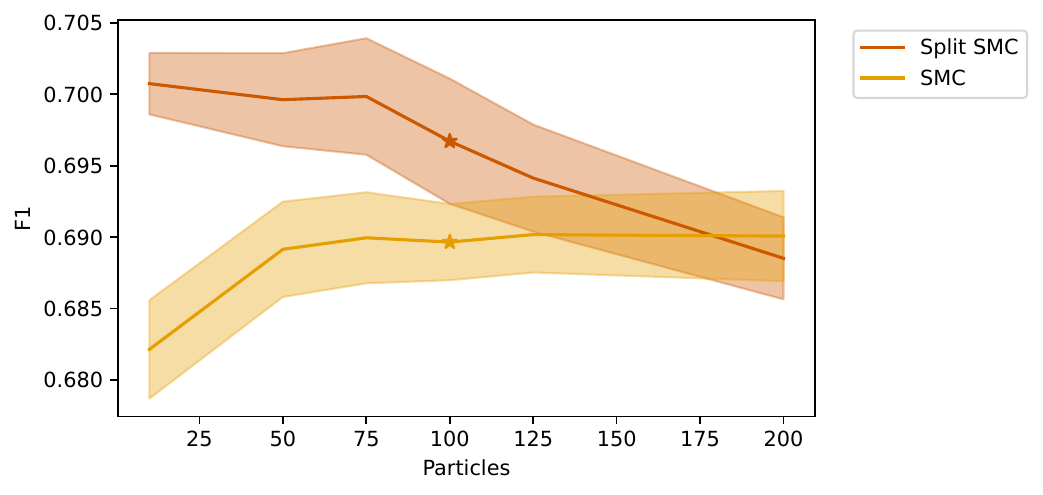}
    }
    \caption{Performance metrics against particle set size for the REBEL-200 dataset. The mean across replications is plotted, with a shaded region indicating a 95\% confidence interval. The stars show the configurations reported in Table \ref{tab:metrics}.}
    \label{fig:rebel200-particles}
\end{figure}

\begin{figure}
    \centering{
\includegraphics[height=4cm]{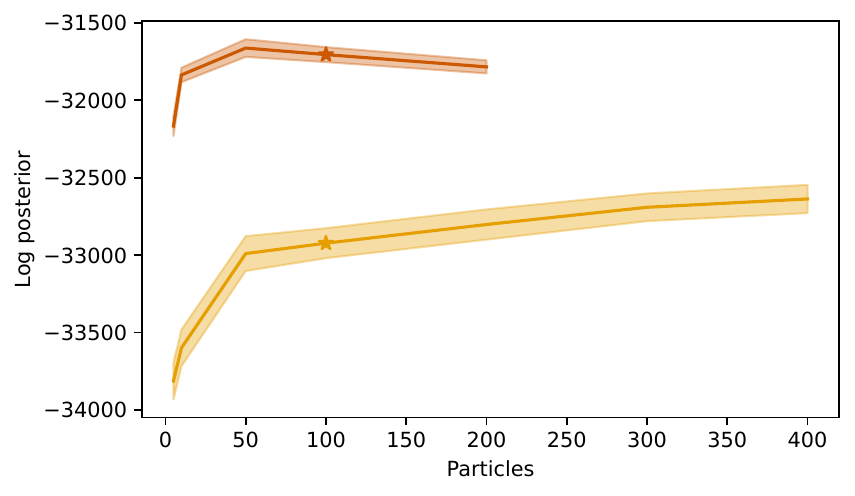}
\includegraphics[height=4cm]{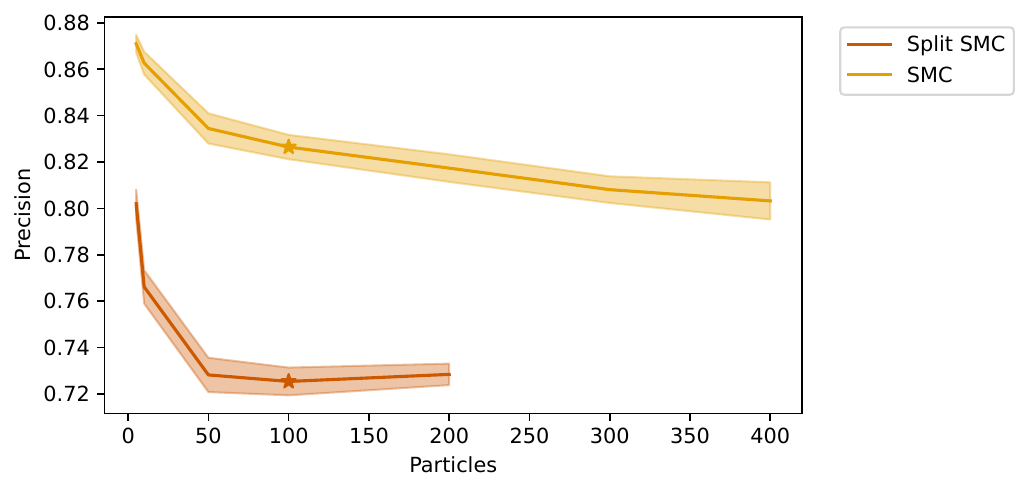}\\
\includegraphics[height=4cm]{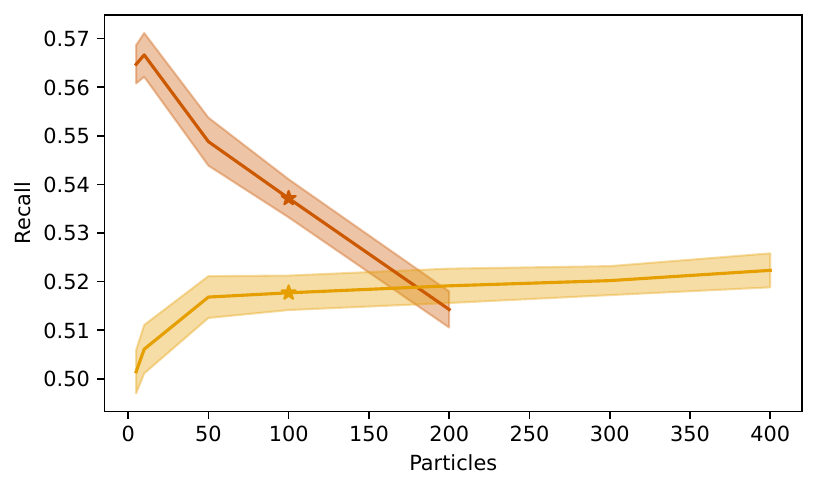}
\includegraphics[height=4cm]{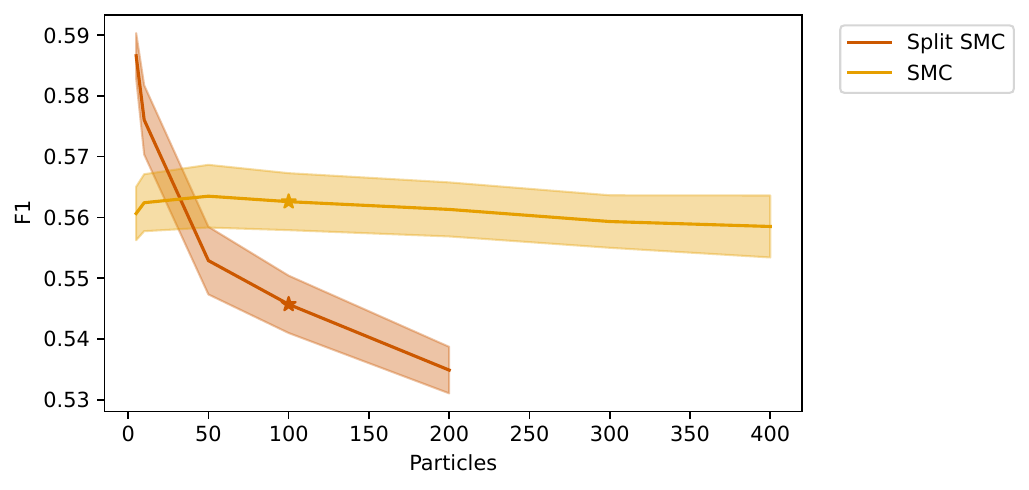}
    }
    \caption{Performance metrics against particle set size for the TweetNERD dataset. The mean across replications is plotted, with a shaded region indicating a 95\% confidence interval. The stars show the configurations reported in Table \ref{tab:metrics}.}
    \label{fig:tweet-particles}
\end{figure}

\subsubsection{Effective Particle Set Size in Split SMC}\label{sec:ess}

The number of particles implicitly represented by the split particle set can be computed by taking the product of the sizes of each subproblem. We plot the number of subproblems and the log of the effective number of particles for each experiment in Figure \ref{fig:subprobs-particles}. The number of subproblems is decreasing in $m$ in each case, since larger values of $m$ allow us to store lower-probability clusterings that introduce overlap between potential partitions of the dataset. This decrease is dramatic for smaller values of $m$, but levels off as $m$ increases. The effective number of particles typically increases with $m$ regardless. However, on the text datasets, it decreases again for larger values: this may explain why increasing $m$ past 100 did not improve the log-posterior density in these experiments.

\begin{figure}
    \centering{
\includegraphics[height=4cm]{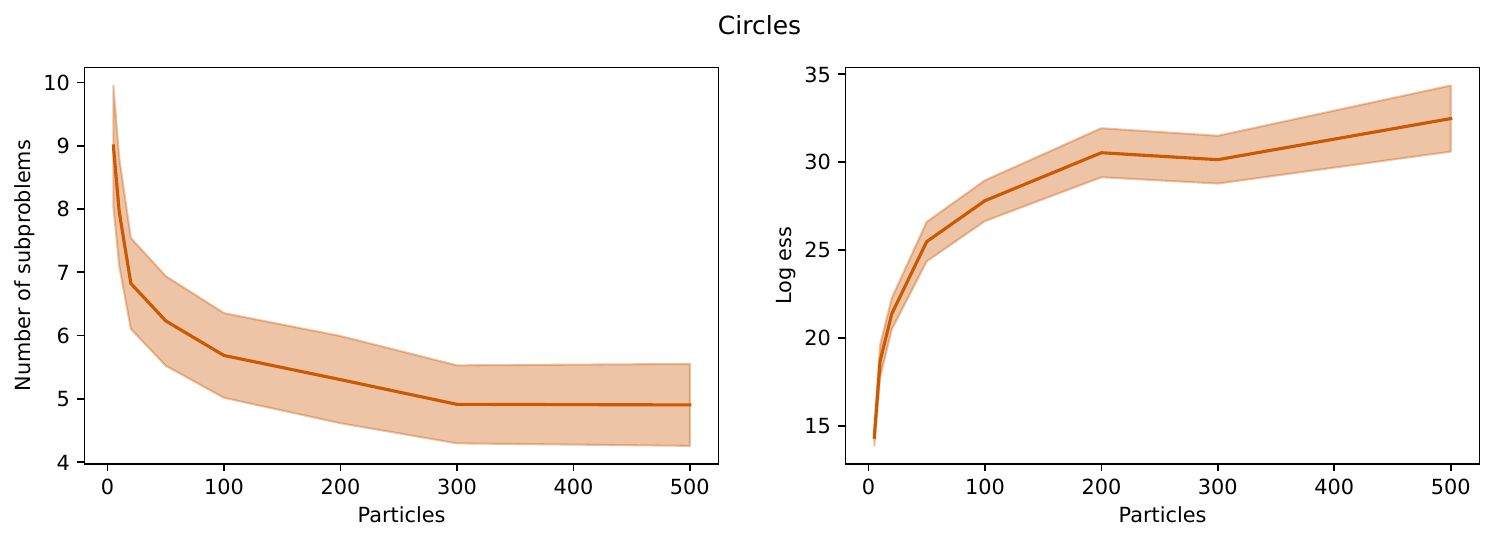}
\includegraphics[height=4cm]{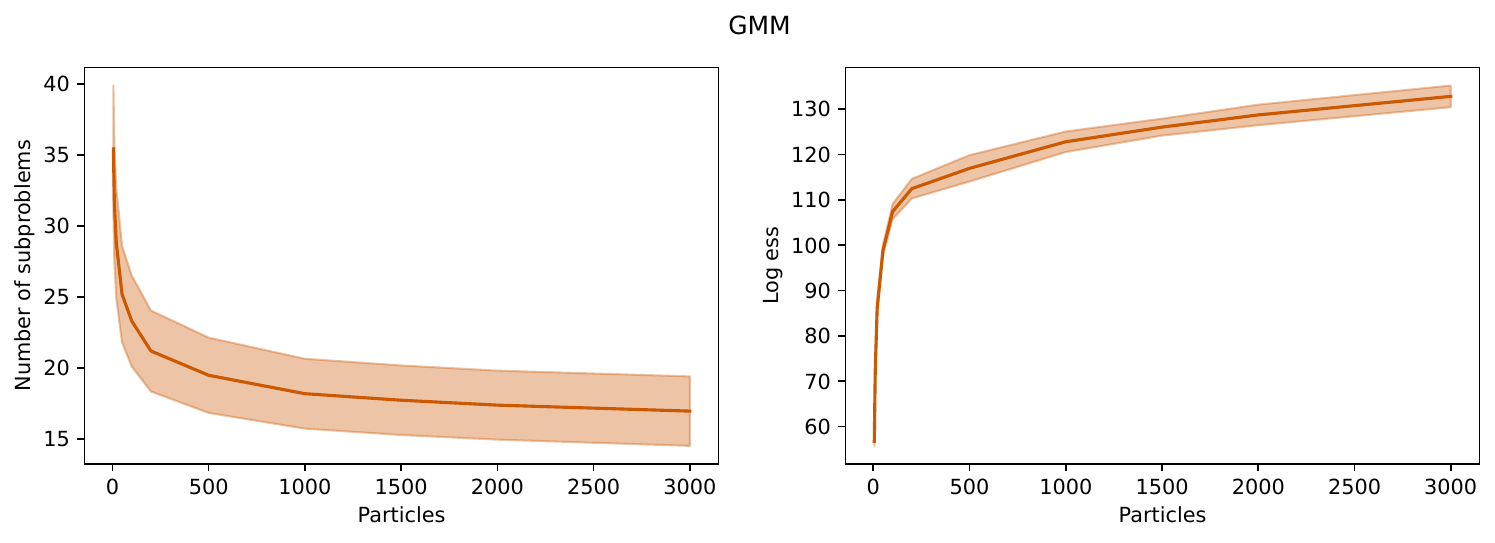}\\
\includegraphics[height=4cm]{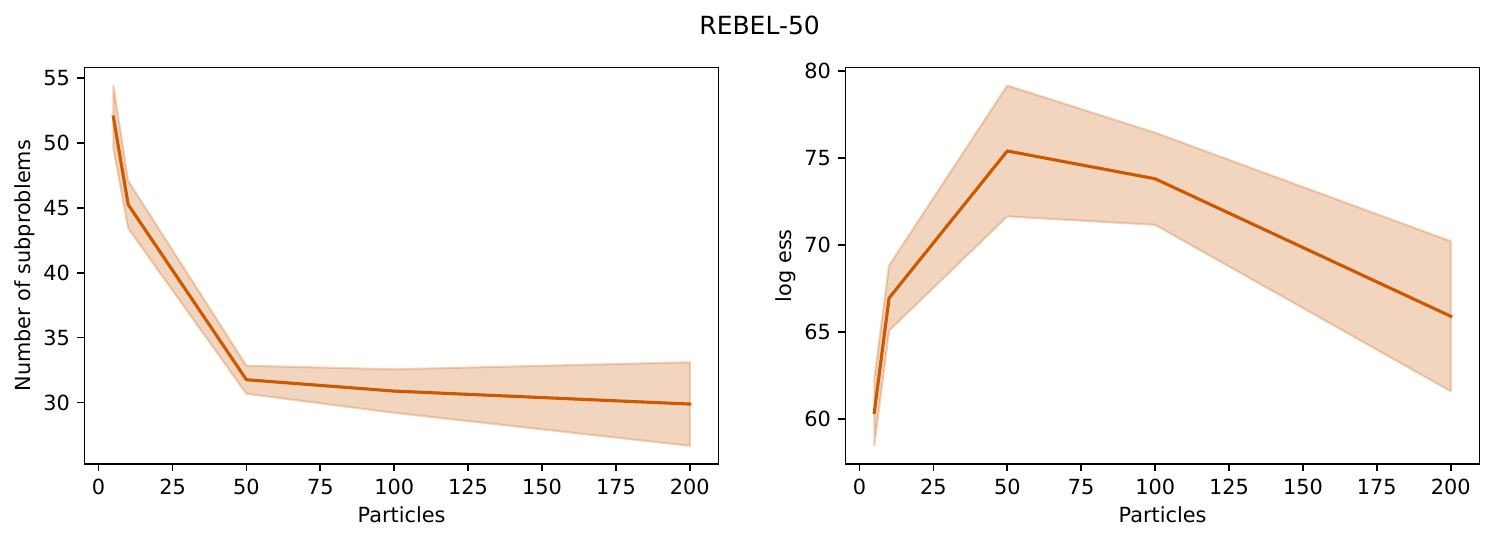}
\includegraphics[height=4cm]{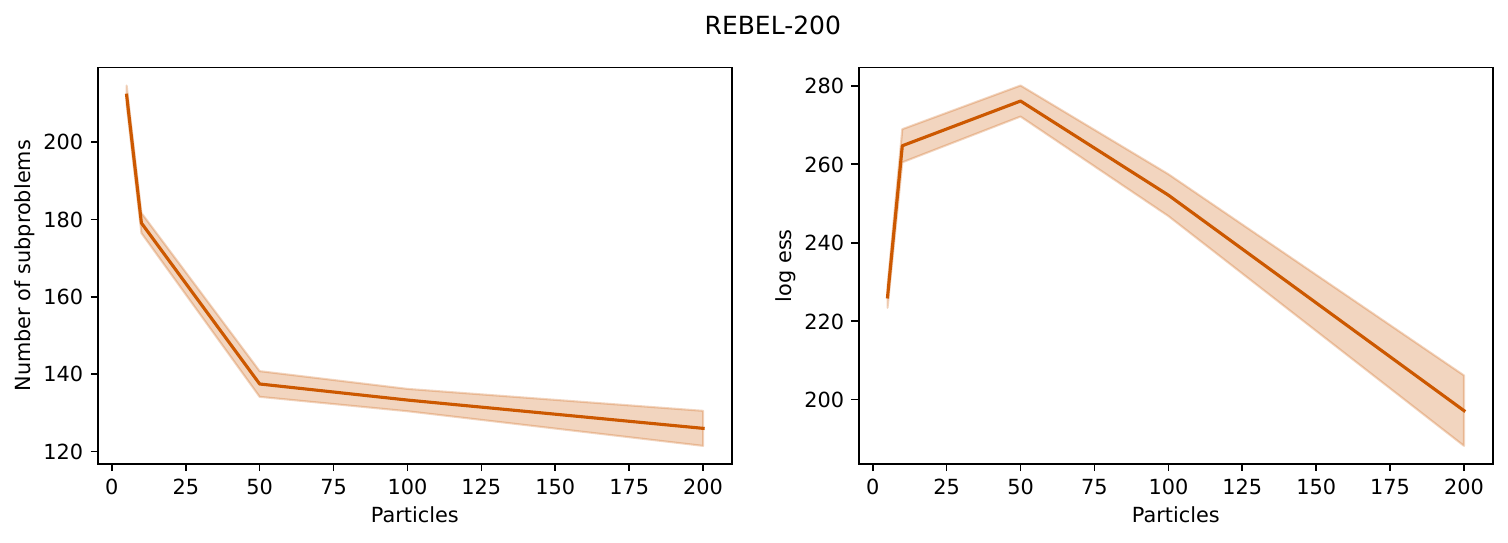}
\includegraphics[height=4cm]{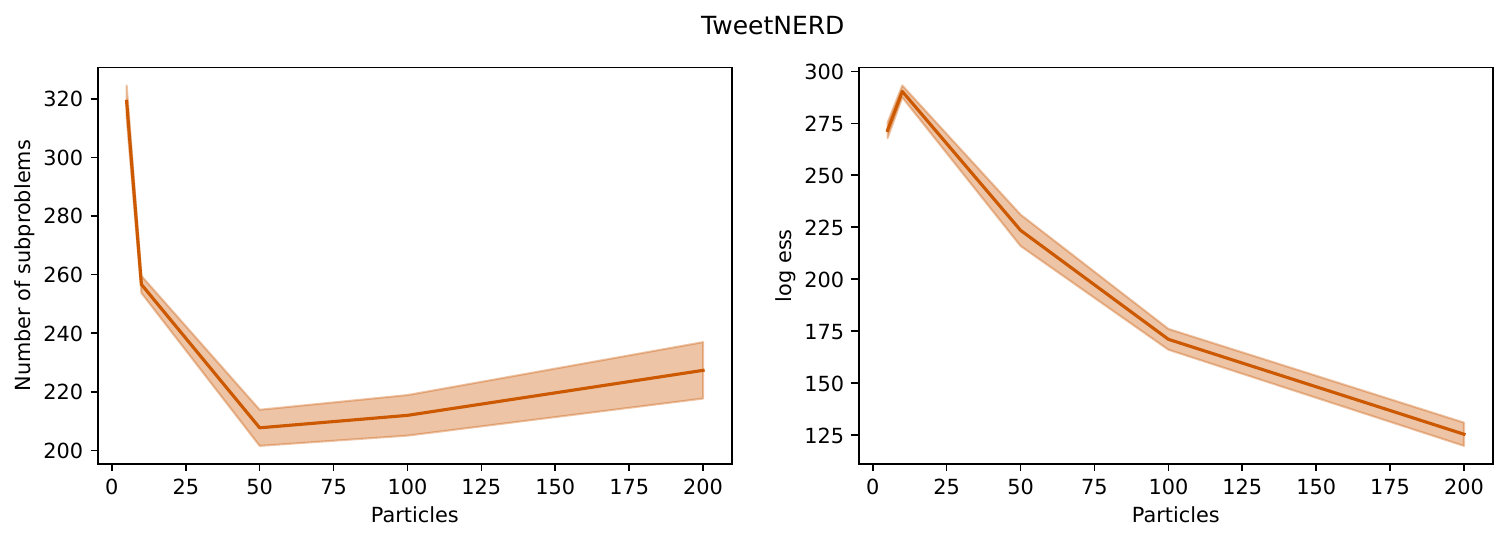}
    }
    \caption{Number of subproblems (left) and log effective particle set size (right) against particle set size $m$ for each dataset. The mean across replications is plotted, with a shaded region indicating a 95\% confidence interval.}
    \label{fig:subprobs-particles}
\end{figure}

\FloatBarrier

\subsection{Surrogate Model Ablation}\label{sec:surrogate-ablation}

We investigate the impact of using surrogate models by fixing the number of particles used in sequential clustering and varying the number of model evaluations made. This was done by changing the number of particles proposed under the surrogate likelihood, which are then re-weighted and resampled according to the model likelihood. We call this parameter $m'$, and a detailed explanation of the surrogate proposal step can be found in Appendix \ref{sec:smc-details}. In the limit as the parameter $m'$ increases, the surrogate model is not used at all. We used $m'=m$ in all previous experiments, with the exception of the greedy algorithm where this was set according to evaluation budget (not used for 2D data, and $m'=100$ reported for text data). In each of the experiments below, we found that the clustering with the highest F1 was found by split SMC with $m'=m$, and that split SMC found the clusterings with the highest log-posterior density for each $m'$.

Figure \ref{fig:circles-evals} shows results for $m=50$ on the circles data. Surprisingly, increasing the number of model evaluations actually decreases the log-posterior density of the clusterings found for each algorithm, and has a negative impact on every accuracy metric. The dashed horizontal lines on the plot indicate results obtained with $m=50$ using the surrogate model alone -- we can see that the surrogate model results have much higher recall but lower precision compared to those obtained with the neural likelihood model. The F1 obtained with the surrogate is lower than that of split SMC with $m'=m=50$, but greater than or equal to those of every other configuration. 

Figures \labelcref{fig:rebel50-evals,fig:rebel200-evals,fig:tweet-evals} show the results for $m=10$ on the REBEL-50, REBEL-200 and TweetNERD data. This time, increasing the number of model evaluations increases log-posterior density, indicating a greater mismatch between the surrogate and neural models. On TweetNERD, log-posterior density starts to decline again as $m'$ increases. On both REBEL datasets, precision decreases and recall increases in $m'$ for all methods. On REBEL-200, F1 decreases with $m'$ for split SMC and greedy, while for vanilla SMC it increases a little and then starts to decline again slowly.

On TweetNERD, accuracy metrics are decreasing in $m'$ for all methods, despite improvements in log-posterior density. This mirrors the results in Figure \ref{fig:tweet-particles}, where the accuracy of split SMC is decreasing in the number of particles. This may be due to the fact that, on this problem, the surrogate model produced more accurate clusterings when used alone than when the neural model was used. This is likely because the TweetNERD dataset was very out-of-domain for the model, which was trained on REBEL. TweetNERD contained fragments with the name attribute only, and these observations had considerably more within-cluster variability than the REBEL training data. Nevertheless, increased use of the surrogate model allowed algorithms to find clusterings with similar log-posterior density but higher accuracy.

These results indicate that using a surrogate model not only helps reduce runtime by limiting model evaluations, but is also beneficial for accuracy and can even help find clusterings with higher model likelihood in some problems. This effect was not observed with MCMC: Figure \ref{fig:circles-runtime} shows that on the circles data, while there is a substantial improvement in runtime, there is minimal difference between results obtained with normal Gibbs sampling and Metropolis-within-Gibbs, where the surrogate is similarly used as a proposal distribution to reduce model evaluations. This may be due to the challenges of clustering with a complex likelihood function in the online setting -- over-optimising model likelihood early in clustering may cause algorithms to discard more accurate clusterings that would later have higher weight after more data is observed. Combining a neural likelihood with a good choice of surrogate model likely has a regularising effect, as clusterings with low surrogate likelihood are discarded by the proposal step regardless of their likelihood under the model. The surrogate models used had very good recall, and seemed to discard correct clusterings relatively rarely.

\begin{figure}
    \centering{
\includegraphics[height=3.4cm]{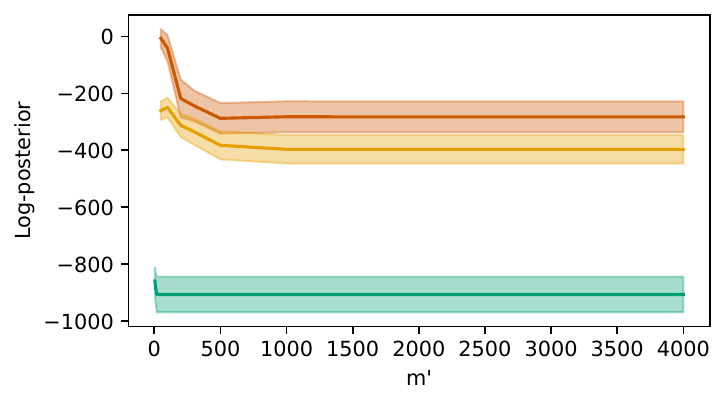}
\includegraphics[height=3.4cm]{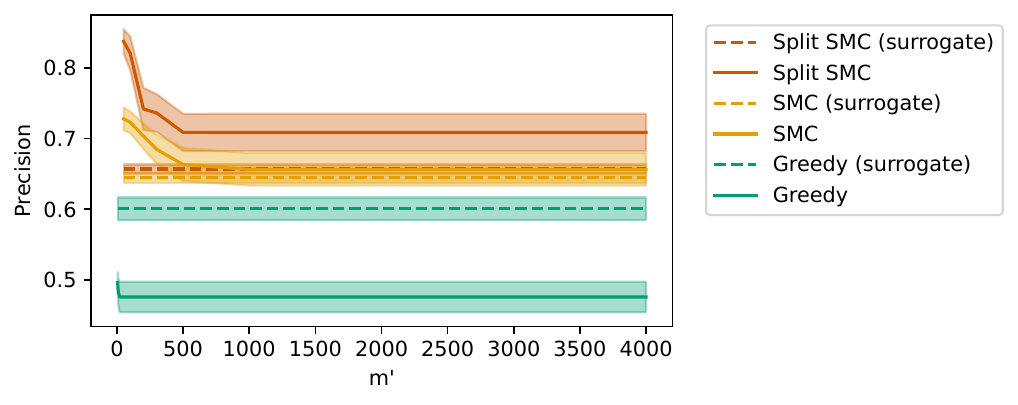}
\includegraphics[height=3.4cm]{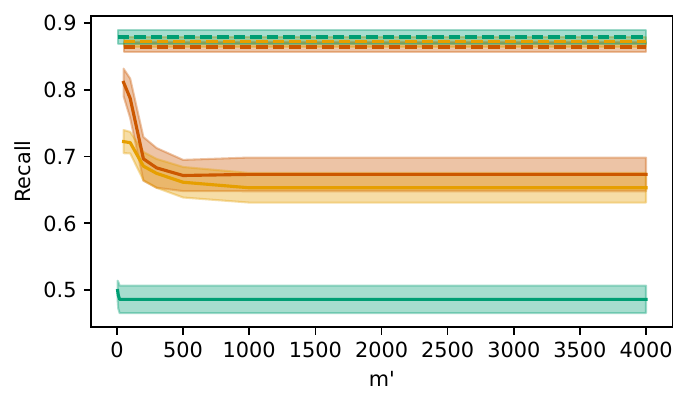}
\includegraphics[height=3.4cm]{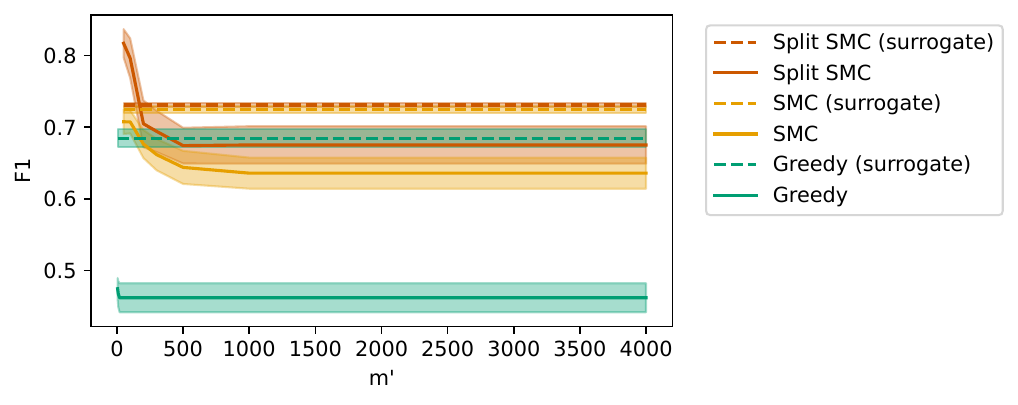}

    }
    \caption{Performance metrics against surrogate proposal size $m'$ on the circles dataset. The mean across replications is plotted, with a shaded region indicating a 95\% confidence interval. Dashed horizontal lines indicate results using the surrogate model alone.}
    \label{fig:circles-evals}
\end{figure}

\begin{figure}
    \centering{
\includegraphics[height=3.4cm]{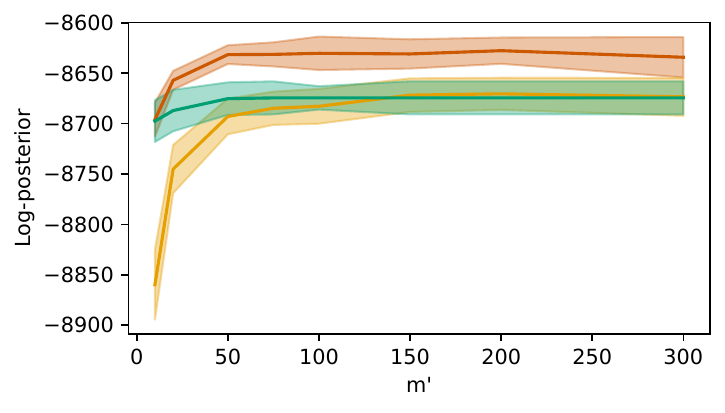}
\includegraphics[height=3.4cm]{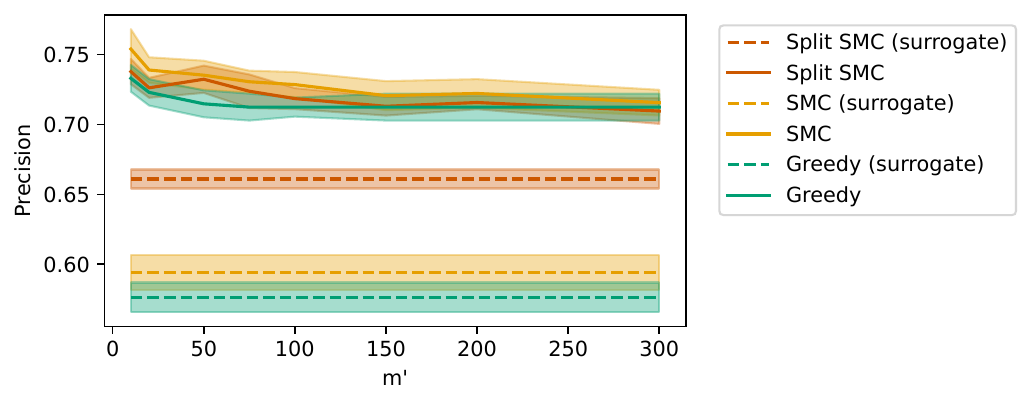}
\includegraphics[height=3.4cm]{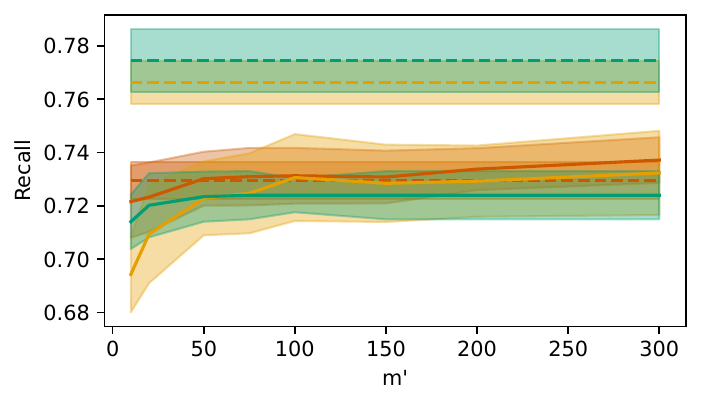}
\includegraphics[height=3.4cm]{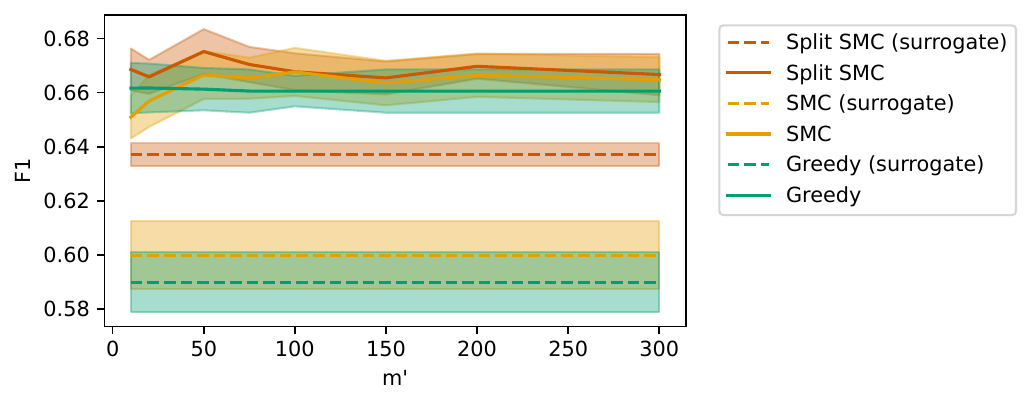}

    }
    \caption{Performance metrics against surrogate proposal size $m'$ on the REBEL-50 dataset. The mean across replications is plotted, with a shaded region indicating a 95\% confidence interval. Dashed horizontal lines indicate results using the surrogate model alone.}
    \label{fig:rebel50-evals}
\end{figure}

\begin{figure}
    \centering{
\includegraphics[height=3.4cm]{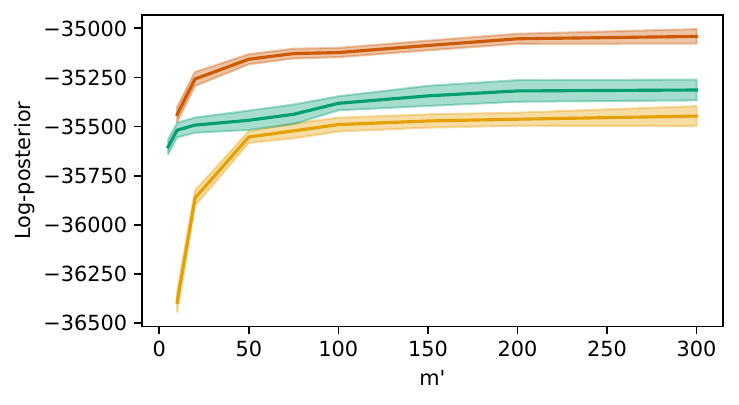}
\includegraphics[height=3.4cm]{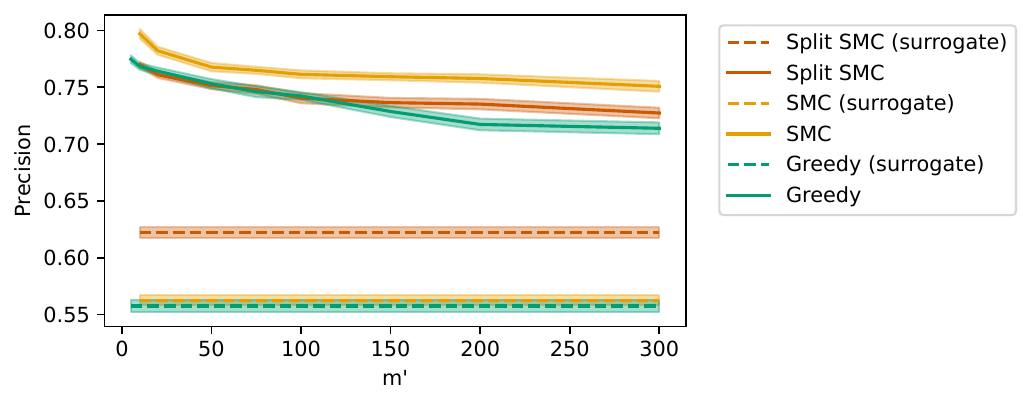}
\includegraphics[height=3.4cm]{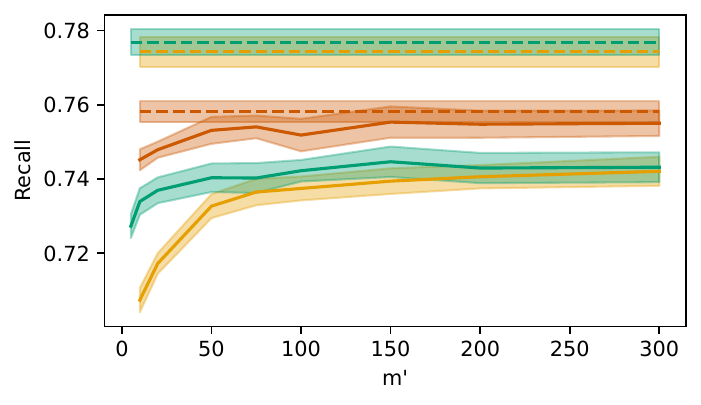}
\includegraphics[height=3.4cm]{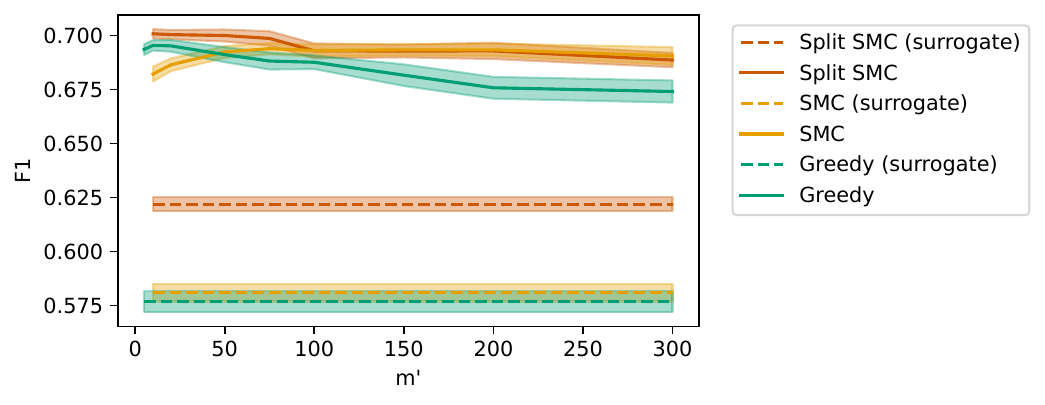}

    }
    \caption{Performance metrics against surrogate proposal size $m'$ on the REBEL-200 dataset. The mean across replications is plotted, with a shaded region indicating a 95\% confidence interval. Dashed horizontal lines indicate results using the surrogate model alone.}
    \label{fig:rebel200-evals}
\end{figure}

\begin{figure}
    \centering{
\includegraphics[height=3.4cm]{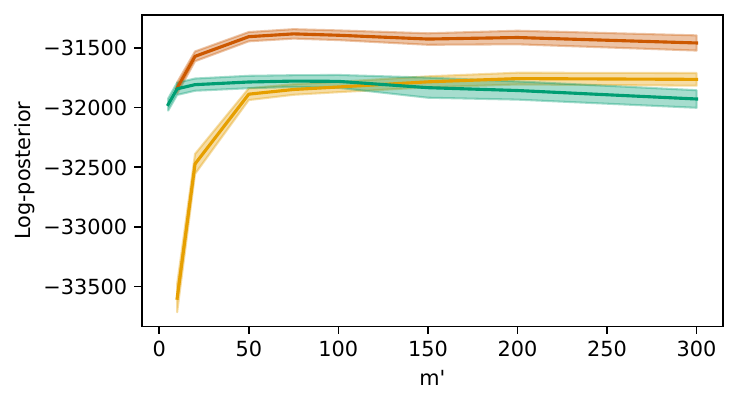}
\includegraphics[height=3.4cm]{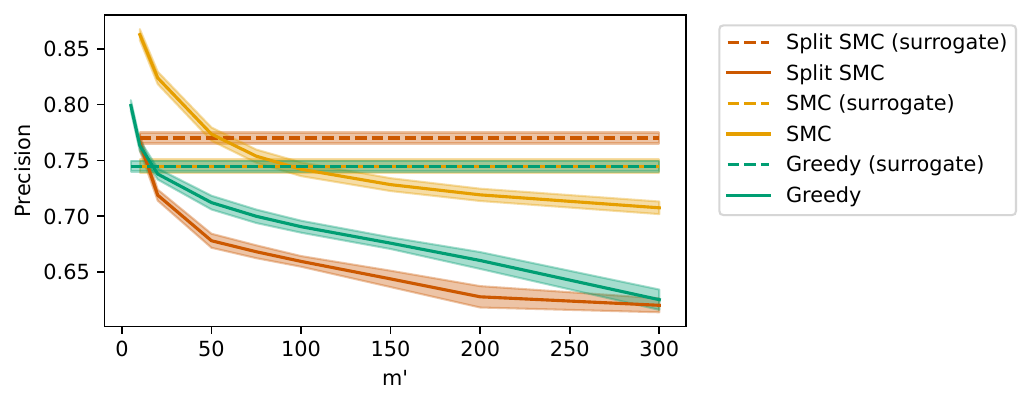}
\includegraphics[height=3.4cm]{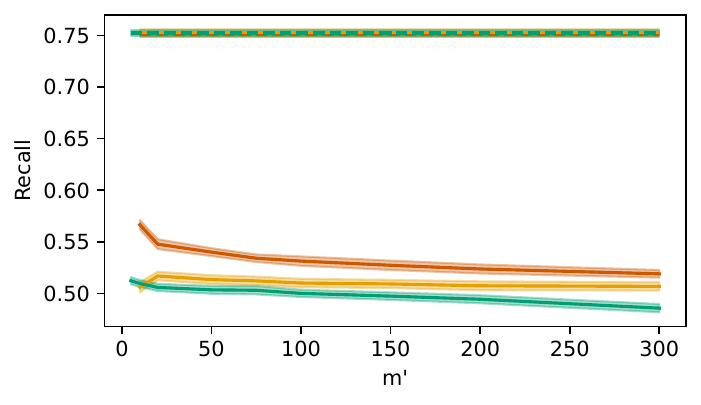}
\includegraphics[height=3.4cm]{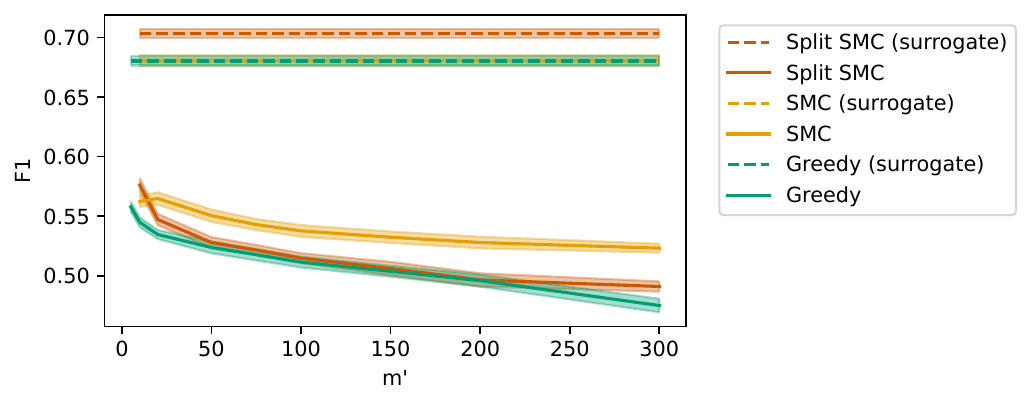}

    }
    \caption{Performance metrics against surrogate proposal size $m'$ on the TweetNERD dataset. The mean across replications is plotted, with a shaded region indicating a 95\% confidence interval. Dashed horizontal lines indicate results using the surrogate model alone.}
    \label{fig:tweet-evals}
\end{figure}

\FloatBarrier
\section{PROOFS}\label{sec:proofs}

\subsection{Posterior Factorisation}\label{sec:splittingproof}

We restate and prove a more formal version of Proposition \ref{prop:splitting}, motivating the splitting approach used in our split SMC algorithm.

By Ewens’s sampling formula \citep{ewens1972sampling}, the posterior probability of a dataset $X=x_{1:t}$ having the cluster labels $z_{1:t}$ can be written as
\begin{align}
  p(z_{1:t}\mid x_{1:t})
  \;\propto\;
  \underbrace{\frac{\alpha^{K-1}}{\prod_{i=0}^{t-1}(\alpha+1+i)}
  \Bigl(\prod_{k=1}^K \Gamma(|c_k|)\Bigr)}_{\rm prior} \ 
  \underbrace{\Bigl(\prod_{k=1}^K p(c_k)\Bigr)}_{\rm likelihood}.\label{eq:dpmm-post}
\end{align}

Here, $c_k$ refers to the collection of datapoints given cluster label $k$, $|c_k|$ is the number of datapoints in cluster $k$, and $p(c_k)$ refers to the likelihood of observing the given unordered collection in a sample of that size. It can be obtained by integrating out some unknown parameter $\theta$, conditioned on which the datapoints within a cluster are independent of each other. 

Suppose we partition the dataset $X$ into disjoint subsets $E^1$ and $E^2$. For a given clustering, if none of the clusters $c_k$ contains elements from both $E^1$ and $E^2$, and without loss of generality we relabel the clusters so the first $J$ contain elements of $E^1$ and the last $K-J$ contain elements of $E^2$, then the posterior density factorises:
\[
  p(z_{1:t}\mid x_{1:t})
  \;\propto\;
  \Bigl(\alpha^{J-1}\!\!\prod_{k=1}^J  \Gamma(|c_k|)\,p(c_k)\Bigr)
  \;\times\;
  \Bigl(\alpha^{K-J-1}\!\!\prod_{k=J+1}^K  \Gamma(|c_k|)\,p(c_k)\Bigr),
\]
where the first product runs over the clusters contained in $E^1$ and the second over those in $E^2$. Conditioning on the event that no cross‐subset clusters occur thus yields
\[
  p(z_{1:t}\mid x_{1:t},\,\text{no‐overlap})
  \;=\;
  p(z_{E^1}\mid x_{E^1} ,\,\text{no‐overlap})\;\times\;p(z_{E^2}\mid x_{E^2} ,\,\text{no‐overlap}).
\]

Since the true posterior factorises for any clustering consistent with a given partition, the following result applies:

\begin{proposition}
Suppose we have a partition of the dataset $X$ into two elements $E^1$ and $E^2$. \\ We index possible clusterings of $E^1$ by $i=1,\ldots,I$ and clusterings of $E^2$ by $j=1,\ldots,J$, writing $p_{ij}$ for the true posterior probability of clustering $(i,j)$. Let $p$ factorise so that $p_{ij}=p^1_ip^2_j$ $\;\forall (i,j)$. Suppose we have a distribution $\hat{p}$ with $\text{support}(\hat{p}) \subseteq \{1,\ldots,I\}\times\{1,\ldots,J\} \subseteq \text{support}(p)$ and marginal distributions $\{b_i\}$ and $\{c_j\}$ over clusterings of $E^1$ and $E^2$. Then, the distribution $q$ with marginals $\{b_i\}$ and $\{c_j\}$ that minimizes $KL(q||p)$ is the product of these marginals, $q_{ij}=c_ib_j$ $\forall i,j$. 
\end{proposition}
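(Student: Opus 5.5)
The plan is to decompose the KL divergence into a term that depends only on the marginals and a term that is the negative entropy of $q$, and then to apply the standard fact that the product of marginals maximises entropy among couplings with fixed marginals. (The statement writes $q_{ij}=c_ib_j$, but the intended product is $q_{ij}=b_ic_j$, with $\{b_i\}$ the marginal over clusterings $i$ of $E^1$ and $\{c_j\}$ the marginal over clusterings $j$ of $E^2$.)

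\textbf{Step 1: split the KL divergence.} For any $q$ with marginals $\{b_i\}$ and $\{c_j\}$, using the factorisation $p_{ij}=p^1_ip^2_j$, I would write
\begin{align*}
KL(q\|p)&=\sum_{i,j} q_{ij}\log q_{ij}-\sum_{i,j}q_{ij}\log p^1_i-\sum_{i,j}q_{ij}\log p^2_j\\
&=-H(q)-\sum_i b_i\log p^1_i-\sum_j c_j\log p^2_j .
\end{align*}
The cross terms collapse to the marginals precisely because $\log p_{ij}$ is additive in $i$ and $j$.

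\textbf{Step 2: the support hypothesis.} This guarantees that every $p^1_i$ and $p^2_j$ on the relevant support is positive, so all terms above are finite. It also ensures the product-of-marginals $b_ic_j$ is absolutely continuous with respect to $p$, so its KL divergence is finite too. The last two terms are therefore constants over the feasible set, and minimising $KL(q\|p)$ is equivalent to maximising the entropy $H(q)$.

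\textbf{Step 3: entropy is maximised by the product.} I would use the identity
\[
H(q)=H(b)+H(c)-I(q),
\]
where $I(q)=KL\bigl(q\,\|\,b\otimes c\bigr)\ge 0$ is the mutual information. Hence $H(q)\le H(b)+H(c)$, with equality if and only if $q=b\otimes c$ by Gibbs' inequality. Combining this with Step 1 gives
\[
KL(q\|p)=KL(b\otimes c\,\|\,p)+KL(q\,\|\,b\otimes c),
\]
a Pythagorean-type identity. This shows both that the minimiser is $q_{ij}=b_ic_j$ and that it is unique.

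\textbf{Corollary for the informal statement.} Taking $q=\hat p$ in this identity yields $KL(\hat p^M\|p)\le KL(\hat p\|p)$, which is the first claim of Proposition~\ref{prop:splitting}.

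\textbf{Main obstacle.} The only delicate point is bookkeeping with zero probabilities. I need $0\log 0=0$, and I need $b\otimes c$ to lie in $\text{support}(p)$, which is where the hypothesis that $\{1,\ldots,I\}\times\{1,\ldots,J\}\subseteq\text{support}(p)$ is used. There is also a modelling subtlety: the product structure of $p$ holds only conditionally on no overlap. Since $q$ places all its mass on no-overlap clusterings, I can replace $p$ by $p(\cdot\mid\text{no-overlap})$. The KL divergence then changes only by the additive constant $-\log p(\text{no-overlap})$, which does not affect the minimiser, so the factorised form $p_{ij}=p^1_ip^2_j$ may be assumed without loss of generality.
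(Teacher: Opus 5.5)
Your proposal is correct and follows essentially the same route as the paper. Both arguments decompose $\mathrm{KL}(q\|p)$ into $-H(q)$ plus terms that depend only on the fixed marginals, then use $H(q)\le H(b)+H(c)$ with equality at $q=b\otimes c$; the paper cites this inequality from Cover and Thomas, whereas you derive it via mutual information. Your Pythagorean identity $\mathrm{KL}(q\|p)=\mathrm{KL}(b\otimes c\|p)+\mathrm{KL}(q\|b\otimes c)$ is a useful sharpening that gives uniqueness and $\mathrm{KL}(\hat p^M\|p)\le \mathrm{KL}(\hat p\|p)$ directly, and you are right that the statement's $c_ib_j$ should read $b_ic_j$.
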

\begin{proof}
\begin{align*}
    \text{KL}(q \| p) &= \sum_{i,j} q_{ij} \log{\frac{q_{ij}}{p_{ij}}}
    = \sum_{i,j} \left(q_{ij} \log{q_{ij}} - q_{ij}\log p_{ij}\right)\\
    &= \sum_{i,j} \left(q_{ij} \log{q_{ij}} - q_{ij}\log p^1_i - q_{ij}\log{p^2_j} \right)\\
    &= \sum_{i,j} q_{ij} \log{q_{ij}} - \sum_i \log(p^1_i) \sum_j q_{ij} - \sum_j \log(p^2_j) \sum_i q_{ij}\\
    &= \underbrace{\sum_{i,j} q_{ij} \log{q_{ij}}}_{-H(q)} - \underbrace{\sum_i b_i \log p^1_i - \sum_j c_j \log p^2_j}_{\text{constant}}\\
\end{align*}
The first term is the negative Shannon entropy $H$ of $q$, whilst the second depends only on the marginals of $p$ and $q$, which are fixed. Since $q$ is a coupling of $b$ and $c$, we have that $H(q) \leq H(b) + H(c)$, with equality achieved when $q=b\otimes c$ \citep{cover1999informationtheory}. Hence, taking $q$ to be the product of the marginals of $\hat{p}$ minimises KL divergence. Note that we take the reverse KL here, since for a particle approximation we typically have $\text{support}(\hat{p}) \subsetneq \text{support}(p)$ and hence the forward KL, $\text{KL}(p||\hat{p})$, is infinite.
\end{proof}
By induction, this result also holds for partitions into more than two elements such that $\hat{p}$ gives a probability of zero to any clustering where two datapoints on different partition elements are in a cluster together.

\subsection{Greedy Resampling}\label{sec:resamplingproofs}

In this section, we provide some theoretical motivation for using greedy resampling in SMC, and for discarding particles that introduce overlap in the split representation. 

\begin{proposition}
Let $x$ and $y$ be discrete probability distributions with $\mathcal{Y} := \text{support}(y)\subseteq \text{support}(x)$. Then $\text{KL}(y||x) \geq -\log\sum_{i\in\mathcal{Y}} x_i$.
\end{proposition}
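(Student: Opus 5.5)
The plan is to compare $y$ with the renormalised restriction of $x$ to $\mathcal{Y}$, and reduce the claim to nonnegativity of a KL divergence. Write $Z := \sum_{i\in\mathcal{Y}} x_i$; this is strictly positive because $\mathcal{Y}$ is nonempty and contained in $\text{support}(x)$. Define the distribution $\bar{x}$ on $\mathcal{Y}$ by $\bar{x}_i = x_i/Z$ for $i\in\mathcal{Y}$ and $\bar{x}_i = 0$ otherwise.

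\textbf{Step 1: decompose the divergence.} Since $y$ vanishes off $\mathcal{Y}$, the sum defining $\text{KL}(y\|x)$ runs only over $\mathcal{Y}$. There every term is finite, because $x_i>0$ on $\mathcal{Y}$. Substituting $x_i = Z\bar{x}_i$ gives
\begin{align*}
\text{KL}(y\|x) = \sum_{i\in\mathcal{Y}} y_i \log\frac{y_i}{Z\bar{x}_i} = \text{KL}(y\|\bar{x}) - \log Z \sum_{i\in\mathcal{Y}} y_i = \text{KL}(y\|\bar{x}) - \log Z .
\end{align*}

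\textbf{Step 2: apply Gibbs' inequality.} By Gibbs' inequality, which follows from Jensen's inequality applied to the concave logarithm \citep{cover1999informationtheory}, $\text{KL}(y\|\bar{x})\ge 0$. Combining this with Step 1 gives $\text{KL}(y\|x)\ge -\log Z$, which is the claim. Equality holds exactly when $y=\bar{x}$.

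\textbf{Step 3: check the degenerate cases.} The argument is essentially one line, so the only real care is with edge cases. Terms with $y_i=0$ contribute zero under the convention $0\log 0=0$. The quantity $\log Z$ is well defined because $Z>0$. I would finish by remarking on the consequence that motivates greedy resampling: for a fixed support size $m$, the bound $-\log Z$ is minimised by choosing $\mathcal{Y}$ to be the $m$ largest-weight atoms of $x$, and it is attained by renormalising $x$ on that set. This is exactly the greedy resampling scheme.
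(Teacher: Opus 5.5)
Your proof is correct and is essentially the paper's argument. The paper applies Jensen's inequality directly to $-\text{KL}(y\|x)=\sum_{i\in\mathcal{Y}} y_i\log(x_i/y_i)\le\log\sum_{i\in\mathcal{Y}}x_i$, whereas you first split off $-\log Z$ and then invoke Gibbs' inequality, which is the same Jensen step applied to the renormalised $\bar{x}$; your version has the small bonus of identifying the slack exactly as $\text{KL}(y\|\bar{x})$, so the equality case $y=\bar{x}$ (which the paper verifies by a separate computation) falls out immediately.
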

\begin{proof}
Proved as part of Theorem 2.6.3 in \cite{cover1999informationtheory}:
\begin{align*}
    - \text{KL}(y||x) &= \sum_{i\in\mathcal{Y}} y_i \log{\frac{x_i}{y_i}}\\
    &\leq \log \sum_{i\in\mathcal{Y}} y_i \frac{x_i}{y_i} \quad (\text{Jensen's inequality})\\
    &= \log \sum_{i\in\mathcal{Y}} x_i
\end{align*}
\end{proof}
This lower bound is achieved if  $x_i= Z y_i$ for all $i \in \mathcal{Y}$, for the constant $Z = \sum_{i\in\mathcal{Y}} x_i$. In this case,
\begin{align*}
    \text{KL}(y||x) &= \sum_{i\in\mathcal{Y}} y_i \log{\frac{y_i}{x_i}}
     = \sum_{i\in\mathcal{Y}} y_i \log\frac{y_i}{Zy_i}
     = -\sum_{i\in\mathcal{Y}} y_i \log Z
     = -\log Z \, .
\end{align*} 

If we choose the support of $y$ subject to the condition $|\mathcal{Y}|=m$, this means that $KL(y||x)$ can be minimised by choosing the $m$ support points of $x$ with the highest weight, since this maximises the value of $\log Z$. Hence, we have that choosing the support of $y$ in this way and setting its weights to be proportional to the weights in $x$ is optimal in terms of KL divergence to $x$. This forms a ``greedy resample" of size $m$ from $x$.

Hence, for a single resample of $m$ particles within a particle filter:
\begin{itemize}
\item The resampled distribution with the lowest KL divergence to the original is the one obtained by a greedy resample.
\item If the weights of the particle approximation are proportional to the true posterior mass function, the optimal resample of size $m$ from that approximation is a greedy resample, in terms of KL divergence to the true posterior distribution.
\end{itemize}
If greedy resampling is used at each iteration of the particle filter, then its weights will always be proportional to the true posterior mass function. This does not guarantee that applying greedy resampling \textbf{sequentially} will result in an optimal approximation to the true posterior. However, empirical results on the data considered in our experiments showed that particle filters using greedy resampling were able to find clusterings with higher log-posterior mass than those using stochastic resampling.

In addition, in the context of our split SMC algorithm, it follows that it is sometimes optimal to discard particles that introduce overlap between subproblems rather than to perform a merge by taking a sample from their joint distribution (discussed in Sections \ref{sec:merging} and \ref{sec:implementation}). If we assume that the particle filter's weights are proportional to the true posterior, then any resample has KL divergence greater than or equal to that of a greedy resample of the top $m$ clusterings in their joint distribution. If all of the assignments from a subproblem $s$ would be discarded in this greedy resample, then the support of the greedily resampled set is a subset of that of the particle set where we retain the original subproblem factorisation by discarding only the particles that assign the new observation to clusters in $s$. Hence the latter particle set has lower KL divergence to the true posterior, and it is optimal not to perform the merge (even if the resampler used in the merge is not greedy). 

While checking this condition does not require the construction of the explicit joint distribution in full, it is nonlinear in the number of subproblems under consideration. In practice, faster approximate checks are sufficient for improved accuracy -- the rule of thumb we suggest in Section \ref{sec:implementation} discards particles assigning an observations to subproblem $s$ if their combined weight is less than $1/m$, since in this case we expect them to be discarded in the merge.

\section{SPLIT SMC ALGORITHM DETAILS}\label{sec:smc-details}

\begin{figure}
\centering
\includegraphics[width=0.9\textwidth]{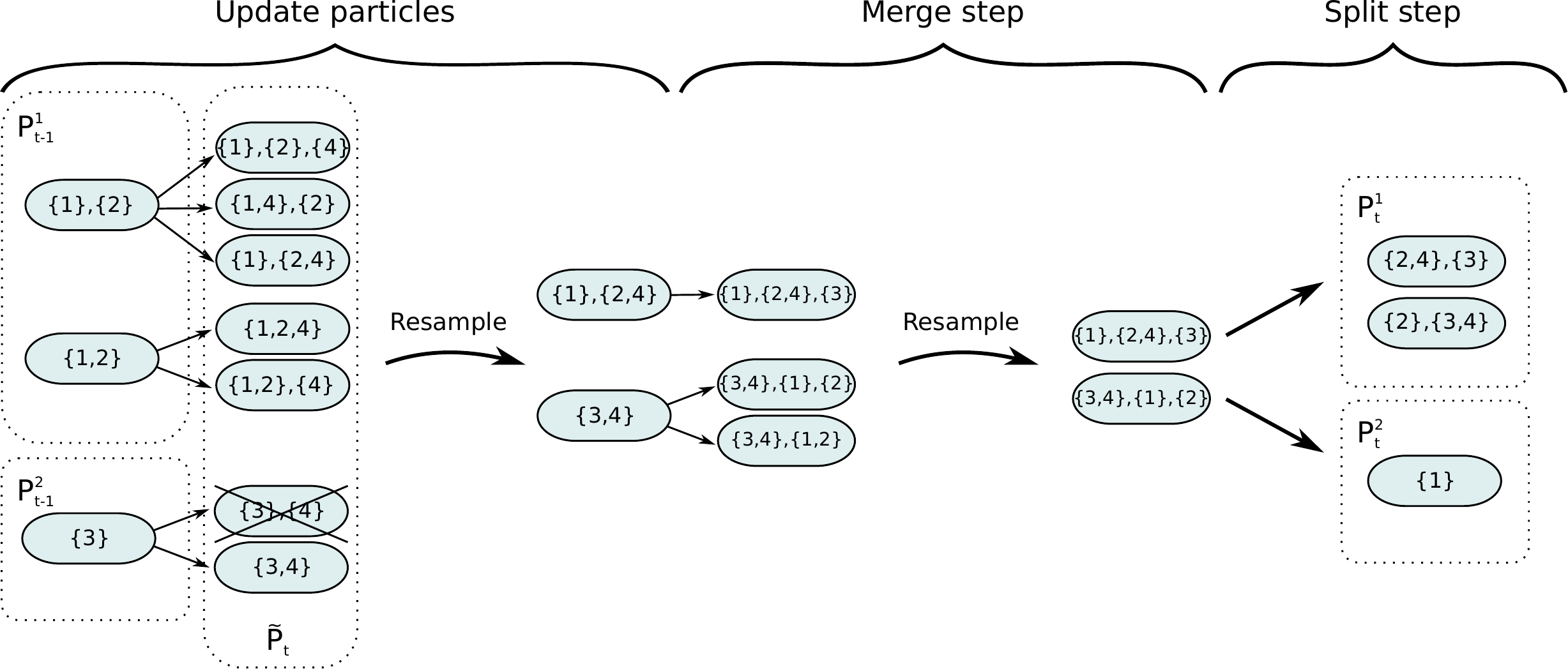}
\caption{Illustration of a state update in split SMC.}\label{fig:update}
\end{figure}

Here, we give a detailed description of the full update step in split SMC. An illustration is given in Figure \ref{fig:update}, where we suppose that $m=2$ and we start with a particle set containing clusterings of previous observations $\{1,2,3\}$, split into the subproblems $\{1,2\}$ and $\{3\}$. We refer to this illustration to give a concrete example for each part of the update.

\begin{enumerate}[label=\textbf{\arabic*.}]
    \item \textbf{Update particles}
\begin{itemize}
    \item Construct the putative particle set $\tilde{\mathcal{P}}_t$, containing a putative particle $\tilde{p}_t(s,i,c)$ for each $1 \leq s \leq S$, $1 \leq i \leq |\mathcal{P}^{(s)}_{t-1}|$, and $c \in p_{t-1}(s,i) \cup \{\varnothing\}$
    \item Compute putative weights $\tilde{w}_t(s,i,c) = w_{t-1}^{(i)} \;
  p\bigl(x_t\text{ belongs to } c \mid  p_{t-1}(s,i),x_{1:(t-1)}\bigr)$
    \item Discard duplicate singleton assignments: for $c=\varnothing$ keep only the $\tilde{p}_t(s,i,c)$ for the subproblem containing the highest value of $p\bigl(x_t\text{ belongs to } c \mid p_{t-1}(s,i), x_{1:(t-1)}\bigr)$
    \item Resample the $m$ particles with the highest value of $\tilde{w}_t(s,i,c)$ from $\tilde{\mathcal{P}}_t$.
\end{itemize}

\ExampleBox{
Having observed new datapoint $4$, a putative particle is constructed for each possible cluster assignment. Since the particle set represents a Cartesian product, the putative particle $\{\{3\},\{4\}\}$ on the second subproblem implicitly represents the particles $\{\{3\},\{4\},\{1\},\{2\}\}$ and $\{\{3\},\{4\},\{1,2\}\}$, which are also contained in the expanded particle set from $\mathcal P^1$. The choice of which set of singletons is retained is arbitrary, so we choose the set on the subproblem with the most likely cluster assignment for $x_t$ to reduce the likelihood of needing to perform a merge later. $m=2$ putative particles are retained after resampling.}

\item \textbf{Merge}

If the resampled particles all have the same value of $s$, then we set $\mathcal P^s_t=\tilde{\mathcal{P}}_t$. Otherwise:
\begin{itemize}
    \item Discard particles with a value of $s$ such that $\sum_{i,c}\tilde{w}_t(s,i,c) \leq 1/m$
    \item If only one value of $s$ remains, pass to split step
    \item If at most two values of $s$ have $|P^s_{t-1}|>1$, compute the full joint distribution over the $\tilde{P}^s_t$. This is done by combining each $\tilde{p}_t{(s,i,c)}$ with every possible particle configuration $p'$ drawn from the Cartesian product $\bigotimes_{j\in\mathcal{S}\setminus\{s\}}\mathcal{P}^j_{t-1}$. Then we have for each combination and weight
\begin{align*}
    p_{\rm joint} &= \tilde{p}_t{(s,i,c)} \;\cup\; p', \\
    w_{\rm joint} &= \tilde{w}_t{(s,i,c)} \;\times\; \prod_{j\in\mathcal{S}\setminus\{s\}} w_{t-1}^{(j,\cdot)}\,.
\end{align*}
    Select the $m$ top-weighted clusterings in the above joint distribution.
    \item Else, perform a multinomial merge: (i) sample $m$ assignments of $x_t$ (with replacement) from the marginal distribution over assignments, then (ii) for each sampled assignment, independently draw one particle from each of the other involved subproblems, and finally (iii) merge any duplicate particles.
    \item Finally, delete the $\mathcal P^s_{t-1}$ from the particle representation, replacing them with a new subproblem consisting of the selected particles.
\end{itemize}

\ExampleBox{
The resample step selected one particle from each subproblem, so a merge occurs. In our implementation, multinomial resampling is only used in merges of more than two subproblems, where at least 3 of the subproblems have more than one particle, to ensure that the computational cost of the merge step is at most quadratic in $m$. In practice, this occurs extremely rarely.

Here, only two subproblems are involved, so we form the joint distribution over clusterings of $\{1,2,3,4\}$ by expanding each particle as a Cartesian product with the old generation of particles on the other subproblem. Then, a resample is carried out to bring the number of particles on the merged subproblem down to $m=2$.
}

\item \textbf{Split}
\begin{itemize}
    \item For the subproblem $\mathcal P^s_t$ to which $x_t$ was assigned, construct a graph whose vertices are observations and whose edges connect any two points that co‐occur in a cluster on at least one particle
    \item Decompose this graph into its connected components, $E_k$
    \item If there is more than one, delete the subproblem and create a new subproblem for each component as
\begin{align*}
  \mathcal P_k
  \;&=\;
  \{\,p\cap C_k \mid p\in\mathcal P^s_t\}, \\
  w^{(k,i)}
  &=\sum_{j=1}^{|\mathcal P_t|}w_t^{(j)}\,\mathds{1}[\,p_t^{(k,i)}\subseteq p_t^{(j)}\,]\,,
\end{align*}
    where $C_k$ is the set of clusters involving points in $E_k$,
    \begin{align*}
  C_k
  \;=\;
  \bigl\{\,c\in\bigcup_{p\in\mathcal P_t}p : c\subseteq E_k\bigr\},
\end{align*}
\end{itemize}

\ExampleBox{
Observation 1 does not appear in a cluster with observations 2, 3, or 4 on any particle, so we perform a split assigning it to a separate subproblem.
}
\end{enumerate}

\subsection{Surrogate Proposal}
When using a surrogate model as a proposal distribution, $m$ particles with non-empty cluster assignments for the new observation are selected by greedy resampling from the putative particle set, with weights determined by using the surrogate likelihood in the weight update \eqref{eq:marginaldist}. The putative particles where the new observation is assigned to a singleton cluster are also retained. After this proposal step, all particles are re-weighted according to the model likelihood and a further resample down to a total of $m$ particles is carried out. We handle singleton assignments in this way because the value of the Dirichlet process concentration $\alpha$ is typically tuned according to the model likelihood and hence may be poorly calibrated to the surrogate likelihood. Keeping the singleton assignment particles until the model resampling step prevents them from being discarded prematurely by the surrogate model, ensuring that assignments to non-empty clusters are only weighted highly in the final particle set if they are more likely than the singleton assignment.

If $m$ is small or there is considerable mismatch between the surrogate model and the target model, it can be helpful to propose more than $m$ particles from the surrogate distribution. The size of the proposal step can be chosen according to model evaluation budget, since the number of model evaluations required is at most $m'+1$, where $m'$ is the number of non-singleton particles selected in the proposal step. We investigate the impact of varying $m'$ in Section \ref{sec:surrogate-ablation}. We set $m'=m$ in the rest of our experiments, so that the number of model evaluations scales with the size of the particle set. For the greedy algorithm, $m'$ is set according to evaluation budget since in this case $m$ is fixed at 1. We use $m'=100$ for the text data, and do not use a surrogate model in greedy clustering of the circles data.

\section{EXPERIMENTAL DETAILS}\label{sec:experiment-details}
The source code for our experiments is available on GitHub at \href{https://github.com/microsoft/smc-clustering}{https://github.com/microsoft/smc-clustering}. Algorithms were implemented in Python 3 with JAX \citep{jax2018github}, SciPy \citep{2020SciPy-NMeth} and NumPy \citep{harris2020numpy}. The neural model for the circles data was implemented in Flax \citep{flax2020github} and the neural language model was implemented in PyTorch \citep{2019pytorch}. In all of our algorithm implementations, likelihood evaluations are cached to avoid additional computational cost from re-computing likelihoods. The two REBEL experiments were run on a Linux virtual machine with Ubuntu 24.04.2 LTS on GPU, an A100 with 80Gb of DRAM. All other experiments were run on a cluster with the 22.04.5 LTS ``Jammy Jellyfish" 64 bit operating system on CPU (Intel(R) Xeon(R) CPU E5-2699 v3 @ 2.30GHz) with a memory limit of 10Gb allocated per replication.

\subsection{Offline Algorithms}\label{sec:algorithm-details}

Offline methods were initialised with the fully disconnected clustering, where every observation is in a singleton cluster. For MCMC samplers, other initialisations are possible, such as the fully-connected clustering that places all observations in a single cluster, and can be helpful for mixing in some applications. The disconnected initialisation is more practical for the clustering problems we consider for two reasons: they have a large underlying number of clusters, and neural likelihood models have an evaluation cost that increases with cluster size.

\paragraph{MCMC} We use a Gibbs sampler \citep{neal2000dpmmmcmc} where at each iteration a Gibbs update is carried out on each datapoint's assignment, in a random order. This involves sampling a new cluster assignment from the marginal distribution defined in \eqref{eq:marginaldist}. For the problems with neural likelihoods, we also considered a Metropolis-within-Gibbs variant. In this sampler, the new assignment is proposed using the surrogate likelihood and is then accepted or rejected according to a Metropolis-Hastings step. This process substantially reduces model evaluations relative to ordinary Gibbs sampling, but can have slower mixing in terms of the number of iterations required for convergence. We compared the two approaches in the circles experiment (see Section \ref{sec:runtime-metrics}), and found that Metropolis-within-Gibbs converged in a much shorter time and obtained slightly superior results to ordinary Gibbs, which did not appear to converge fully in the given runtime.

\paragraph{Agglomerative Clustering} In Table \ref{tab:metrics}, we presented results for a full-batch agglomerative algorithm that computes model likelihoods for each possible merged cluster at each iteration. This method has an extremely high runtime due to the number of model evaluations required, and it is possible to formulate a faster, approximate agglomerative algorithm by instead sampling batches of cluster merges to evaluate at each iteration. We present results for a range of batch sizes in Section \ref{sec:runtime-metrics}. They achieved comparable but typically inferior results at lower runtimes than full-batch agglomerative clustering, but were slower than the online algorithms. 

\subsection{Experiments}\label{sec:data-details}

\paragraph{Circles} 15 cluster centres were drawn from a 2D uniform distribution on $[-5,5]\times[-5,5]$, with datapoints drawn uniformly from the circumference of a circle of radius 0.6. The number of points in each cluster was drawn from a uniform categorical distribution on $\{10,\ldots,30\}$, resulting in a dataset of size 306.

The variational diffusion model \citep{kingma2021variationaldiffusion}  was trained on a simulated dataset of 100~000 clusters for 150 epochs using the Adam optimiser. The score function model was a transformer with self attention applied across the sequence of observations within each cluster in order to produce a model for collections of observations \citep{Vaswani-etal-2017-attention}. Since we do not use positional encoding, the resulting likelihood model is invariant to permutation of its inputs, fulfilling the criteria we assume for cluster likelihoods in Section \ref{sec:Bayesian-nonparametrics}. The architecture used was a residual neural network with 6 residual blocks, each consisting of a self-attention layer, a dense layer of dimension 8, and a dense layer of dimension 2. GELU (Gaussian Error Linear Unit) activation and RMSNorm  \citep{zhang-sennrich-2019-rmsnorm} were used, with dropout applied during training. A linear noise schedule was used. Cluster likelihoods were estimated by numerical integration of the probability flow ODE. The full implementation of this model can be found in the source code for this paper.

The prior parameters in the surrogate model were tuned by numerically optimising its model evidence on a simulated dataset of 1000 clusters. The Dirichlet process concentration was set to $\alpha=1$ for all of the clustering algorithms. 

Results are reported over 30 replications of the clustering algorithms. SMC samplers were run for a range of particle set sizes with runtimes up to 1000s. MCMC samplers were given a maximum of 10~000s, stopping early if no change was made to the estimated MAP clustering for 500 sweeps. The agglomerative algorithms were run until all potential merges were exhausted, or until 100 iterations passed with no changes to the clustering. Note that the model log-likelihood evaluations are noisy due to use of the Skilling-Hutchinson trace estimator when computing cluster likelihoods. This leads to variability in the results of otherwise deterministic algorithms. Full results for this dataset are plotted in Figure \ref{fig:circles-runtime}.

\paragraph{Gaussian mixture} Cluster sizes for 100 clusters and 700 datapoints were generated from a Dirichlet process with concentration $\alpha=20$ using the stick-breaking construction. Some of the resulting clusters were empty, so the number of clusters present in the generated dataset was 80. The cluster data were generated from Gaussian distributions with independent components. In order to create a problem with multiple close groups of clusters, means were generated in a hierarchical fashion. First, cluster variances were drawn from an inverse Gamma distribution,
\begin{align*}
\sigma_i^{-2} &\sim\Gamma(\alpha,\beta)\,.
\end{align*}
Then 16 group centres were drawn from a Normal distribution,
\begin{align*}
    \mu_g \sim N(\mu,\lambda^{-1})\,.
\end{align*}
Each cluster mean $\mu_i$ was set by selecting one of the $\mu_g$ uniformly at random and applying an additional Normal perturbation,
\begin{align*}
    \mu_i \sim \mu_g + N\left(0,\frac{\sigma_i^2}{125\lambda}\right)\,.
\end{align*}
The parameters used were $\alpha=2$, $\beta=0.5$, $\mu=0$, and $\lambda=0.0002$. In clustering, cluster locations and variances were modelled with a Normal-inverse-Gamma distribution with these parameters.

The MCMC sampler was given a maximum of 10~000s, stopping early if no change was made to the estimated MAP clustering for 500 sweeps. The agglomerative algorithms were run until all potential merges were exhausted, or until 100 iterations passed with no changes to the clustering. These results are plotted in Figure \ref{fig:gmm-runtime}.

\subsubsection{Text Data}\label{sec:kb-details}

\paragraph{Neural model} The cluster likelihood model scores a set of fragments by canonically serializing them into JSON and summing grammar-constrained token log-probabilities under a decoder-only Transformer ($\approx$8.8M parameters). We use a hybrid tokenizer with a structural symbol dictionary and BPE pieces applied inside quoted string literals \citep{sennrich-etal-2016-bpe}. The model is GPT-style \citep{Vaswani-etal-2017-attention}: d\_model=256, 10 layers, 8 heads (32-dim), SwiGLU \citep{shazeer2020gluvariantsimprovetransformer} feed-forward (d\_ff=768), RMSNorm pre-norm \citep{zhang-sennrich-2019-rmsnorm}, dropout 0.05, RoPE positional encoding \citep{su-etal-2024-rope}, and tied embeddings. A finite-state grammar automaton builds an allowed-token mask at each step, so the sequence log-likelihood is the sum of log-probabilities over valid transitions only \citep{koo2024automatabased}. Cluster (set) likelihoods are the products of these constrained conditional probabilities over the canonical serialization. Training used AdamW (lr 3e-4, weight decay 0, 1500 warmup steps, cosine decay with 0.1 floor) for 10 epochs on 1.6M training examples from the REBEL train split.

\paragraph{Model calibration} While $\alpha$ can be tuned by empirical Bayes methods \citep{escobarwest1995bayesmix,aucliffe2006empiricalbayes}, this process assumes that the model likelihood is both normalised and well-calibrated, since $\alpha$ is adjusted to make clusterings with high likelihood have high prior probability. We found that for neural models, and neural language models in particular, the resulting values for $\alpha$ did not necessarily give good accuracy. Since rescaling $\alpha$ is equivalent to multiplying each cluster likelihood by a constant in the clustering posterior distribution, tuning $\alpha$ can be seen as equivalent to tuning the normalising constant of the likelihood. We used a labelled validation clustering dataset from REBEL to find a likelihood rescaling that maximised the log-likelihood of the ground truth clustering. The chosen rescaling  was equivalent to setting $\alpha=500$.

\paragraph{N-gram surrogate model} The surrogate model used for clustering entity fragments was a character-level bigram model on the \textit{name} attribute. A Dirichlet prior was placed on the probabilities $\theta_{i|h} = \PP(\text{next character is } i | \text{previous character is } h)$ for each character history $h$, as in \cite{chien2015bayesngram}. The prior parameters $\alpha_{i|h}$ of these Dirichlet distributions were chosen to be a rescaling of the posterior parameter of a fitted Dirichlet-Categorical distribution. Observed n-gram counts were taken from a training subset of the REBEL data, and a uniform Dirichlet prior was used. This is equivalent to setting $\alpha_{i|h} = 1 + \text{count}((h,i))$, the observed count of n-gram $(h,i)$ with plus-one smoothing. A constant multiplicative rescaling factor $c\in(0,1]$ was used on each $\alpha_{i|h}$ to ensure that the prior distribution in the DPMM was not too concentrated on the fitted n-gram frequencies, allowing for more variation in these frequencies between clusters. This was tuned by numerically optimising the model evidence of the true clustering on the labelled training data.

\paragraph{REBEL} Our experiments used two datasets (REBEL-50 and REBEL-200) derived from the REBEL dataset \citep{huguet-cabot-navigli-2021-rebel}, which is released under a CC BY-NC-SA 4.0 license. This dataset potentially contains personal/sensitive information about people, as its contents are taken directly from text publicly available on Wikipedia. We construct our entity clustering dataset by starting from the full set of REBEL “fragments” (extractions from short text spans) and downsampling to a fixed target size while preserving realistic ambiguity. A uniform fragment sample would yield mostly singleton or trivially separable clusters, and picking random entities would still tend to give clusters whose members differ too much to be challenging. Instead, we define “confusing entities” as distinct entities that share at least one surface form (a matching mention string) with a seed entity, making them plausible collision candidates. We then iteratively build the dataset by selecting 2-5 confusing entities at a time and collecting each entity's fragments (capped at 50) until the total number of unique entities reaches the target. For splitting, each group (a seed entity plus its confusing entities and all of their fragments) is assigned wholesale to a single train, dev, or test split.

\paragraph{TweetNERD} We construct a clustering dataset from the TweetNERD-Academic dataset of \cite{mishra2022tweetnerd}, a benchmark dataset in named entity recognition and disambiguation. It is released under a CC BY 4.0 license. TweetNERD-Academic consists of free-text entity mentions extracted from tweets, annotated with corresponding entities in the Wikidata knowledge base \citep{wikidata}. We collate the list of name mentions for each entity, and our clustering dataset is made up of the unique name variations for the 200 largest entities, with resulting cluster sizes from 4 to 148, and 1931 observations in total. In clustering, these are treated as entity fragments, each with a single observation of the \textit{name} attribute.

\FloatBarrier

\end{document}